\newtheorem{theorem}{Theorem}
\newtheorem{lemma}[theorem]{Lemma}
\newtheorem{proposition}[theorem]{Proposition}
\newtheorem{corollary}[theorem]{Corollary}
\theoremstyle{definition}
\newcommand{\argmin}{\operatornamewithlimits{argmin}}
\renewcommand{\tilde}{\widetilde}
\renewcommand{\Re}{\mathbb{R}}
\begin{document}
\title{A consistent adjacency spectral embedding for stochastic blockmodel graphs}
\author{Daniel L. Sussman, Minh Tang, Donniell E. Fishkind, Carey E. Priebe\\
Johns Hopkins University, Applied Math and Statistics Department}

\maketitle
\begin{abstract}
We present a method to estimate block membership of nodes in a random graph generated by a stochastic blockmodel. We use an embedding procedure motivated by the random dot product graph model, a particular example of the latent position model. The embedding associates each node with a vector; these vectors are clustered via minimization of a square error criterion. We prove that this method is consistent for assigning nodes to blocks, as only a negligible number of nodes will be mis-assigned. We prove consistency of the method for directed and undirected graphs. The consistent block assignment makes possible consistent parameter estimation for a stochastic blockmodel. We extend the result in the setting where the number of blocks grows slowly with the number of nodes. Our method is also computationally feasible even for very large graphs. We compare our method to Laplacian spectral clustering through analysis of simulated data and a graph derived from Wikipedia documents.
\end{abstract}

\section{Background and Overview}
Network analysis is rapidly becoming a key tool in the analysis of modern datasets in fields ranging from neuroscience to sociology to biochemistry.  In each of these fields, there are objects, such as neurons, people, or genes, and there are relationships between objects, such as synapses, friendships, or protein interactions. The formation of these relationships can depend on attributes of the individual objects as well as higher order properties of the network as a whole. Objects with similar attributes can form communities with similar connective structure, while unique properties of individuals can fine tune the shape of these relationships. Graphs encode the relationships between objects as edges between nodes in the graph.

Clustering objects based on a graph enables identification of communities and objects of interest as well as illumination of overall network structure. 
Finding optimal clusters is difficult and will depend on the particular setting and task. Even in moderately sized graphs, the number of possible partitions of nodes is enormous, so a tractable search strategy is necessary.
Methods for finding clusters of nodes in graphs are many and varied, with origins in physics, engineering, and statistics; \cite{Fortunato2010} and \cite{Fjallstrom1998} provide comprehensive reviews of clustering techniques.
In addition to techniques motivated by heuristics based on graph structure, others have attempted to fit statistical models with inherent community structure to a graph. \citep{Nowicki2001,Handcock2007,snijders97:_estim,Airoldi2008}. 

These statistical models use random graphs to model relationships between objects; \cite{Goldenberg2009} provides a review of statistical models for networks. A graph consists of a set of nodes, representing the objects, and a set of edges, representing relationships between the objects. The edges can be either directed (ordered pairs of nodes) or undirected (unordered pairs of nodes). In our setting, the node set is fixed and the set of edges is random. 

\cite{hoff02:_laten} proposed what they call a latent space model for random graphs. Under this model each node is associated with a latent random vector. There may also be additional covariate information which we do not consider in this work.
The vectors are independent and identically distributed and the probability of an edge between two nodes depends only on their latent vectors.
Conditioned on the latent vectors, the presence of each edge is an independent Bernoulli trial.

One example of a latent space model is the random dot product graph (RDPG) model \citep{young07:_random}. Under the RDPG model, the probability an edge between two nodes is present is given by the dot product of their respective latent vectors.
For example, in a social network with edges indicating friendships, the components of the vector may be interpreted as the relative interest of the individual in various topics. The magnitude of the vector can be interpreted as how talkative the individual is, with more talkative individuals more likely to form relationships. Talkative individuals interested in the same topics are most likely to form relationships while individuals who do not share interests are unlikely to form relationships.

We present an embedding motivated by the RDPG model which uses a decomposition of a low rank approximation of the adjacency matrix. The decomposition gives an embedding of the nodes as vectors in a low dimensional space. This embedding is similar to embeddings used in spectral clustering but operates directly on the adjacency matrix rather than a Laplacian. We discuss a relationship between spectral clustering and our work in Section~\ref{sec:disc}.

Our results are for graphs generated by a stochastic blockmodel \citep{Holland1983,Wang1987}. In this model, each node is assigned to a block, and the probability of an edge between two nodes depends only on their respective block memberships; in this manner two nodes in the same block are stochastically equivalent. In the context of the latent space model, all nodes in the same block are assigned the same latent vector. An advantage of this model is the clear and simple block structure, where block membership is determined solely by the latent vector.

Given a graph generated from a stochastic blockmodel, our primary goal is to accurately assign all of the nodes to their correct blocks. 
Algorithm~\ref{alg:procedure} gives the main steps of our procedure. In summary these steps involve computing the singular value decomposition of the adjacency matrix, reducing the dimension, coordinate-scaling the singular vectors by the square root of their singular value and, finally, clustering via minimization of a square error criterion. We note that Step 4 in the procedure is a mathematically convenient stand in for what might be used in practice. Indeed, the standard $K$-means algorithm approximately minimizes the square error and we use $K$-means for evaluating the procedure empirically. 
This paper shows that the node assignments returned by Algorithm~\ref{alg:procedure} are consistent.


Consistency of node assignments means that the proportion of mis-assigned nodes goes to zero (probabilistically) as the number of nodes goes to infinity. Others have already shown similar consistency of node assignments.
\cite{snijders97:_estim} provided an algorithm to consistently assign nodes to blocks under the stochastic blockmodel for two blocks, and later \cite{Condon1999} provided a consistent method for equal sized blocks. 
\cite{Bickel2009} showed that maximizing the Newman--Girvan modularity \citep{Newman2004} or the likelihood modularity provides consistent estimation of block membership. 
\cite{Choi2010} used likelihood methods to show consistency with rapidly growing numbers of blocks. 

Maximizing modularities and likelihood methods are both computationally difficult, but provide theoretical results for rapidly growing numbers of blocks. Our method is related to that of \cite{McSherry2001}, in that we consider a low rank approximation the adjacency matrix, but their results do not provide consistency of node assignments.
\cite{rohe10:_spect_stoch_block_model} used spectral clustering to show consistent estimation of block partitions with growing number of blocks; in this paper we demonstrate that for both directed and undirected graphs, our proposed embedding allows for accurate block assignment in a stochastic blockmodel. 
These matrix decomposition methods are computationally feasible, even for graphs with a large number of nodes. 

\begin{algorithm}[t]
\begin{algorithmic}
\STATE \textbf{Input}: $\mathbf{A}\in \{0,1\}^{n\times n}$

\STATE \textbf{Parameters}: $d\in\{1,2,\dotsc,n\}$, $K\in\{2,3,\dotsc,n\}$

\STATE \textit{Step 1}: Compute the singular value decomposition, $\mathbf{A}=\tilde{\mathbf{U}}'\tilde{\bm{\Sigma}}'\tilde{\mathbf{V}}'^T$. Let $\tilde{\bm{\Sigma}}'$ have decreasing main diagonal.

\STATE \textit{Step 2}: Let $\tilde{\mathbf{U}}$ and $\tilde{\mathbf{V}}$ be the first $d$ columns of $\tilde{\mathbf{U}}'$ and $\tilde{\mathbf{V}}'$, respectively, and let $\tilde{\bm{\Sigma}}$ be the sub-matrix of $\tilde{\bm{\Sigma}}'$ given by the first $d$ rows and columns.

\STATE \textit{Step 3}: Define $\tilde{\mathbf{Z}}=[\tilde{\mathbf{U}}\tilde{\bm{\Sigma}}^{1/2}|\tilde{\mathbf{V}}\tilde{\bm{\Sigma}}^{1/2}]\in\Re^{n\times 2d}$ to be the concatenation of the coordinate-scaled singular vector matrices.

\STATE \textit{Step 4}: Let $(\hat{\bm{\psi}},\hat{\tau}) = \argmin_{\bm{\psi};\tau} \sum_{u=1}^n \|\tilde{Z}_u - \psi_{\tau(u)}\|_2^2$ give the centroids and block assignments, where $\tilde{Z}_u$ is the $u^{\text{th}}$ row of $\tilde{\mathbf{Z}}$, $\hat{\bm{\psi}}\in\Re^{K\times d}$ are the centroids and $\hat{\tau}$ is a function from $[n]$ to $[K]$.

\RETURN $\hat{\tau}$, the block assignment function, 
\end{algorithmic}
\caption{The adjacency spectral clustering procedure for directed graphs.}
\label{alg:procedure}
\end{algorithm}

The remainder of the paper is organized as follows. In Section~\ref{sec:modAlg} we formally present the stochastic blockmodel, the random dot product graph model and our adjacency spectral embedding.
In Section~\ref{sec:mainResults} we state and prove our main theorem, and in Section~\ref{sec:extension} we present some useful Corollaries. Sections~\ref{sec:modAlg}--\ref{sec:extension} focus only on directed random graphs; in Section~\ref{sec:undirected} we present model and results for undirected graphs. In Section~\ref{sec:results} we present simulations and empirical analysis to illustrate the performance of the algorithm. Finally, in section~\ref{sec:disc} we discuss further extensions to the theorem. In the appendix, we prove some key technical results to prove our main theorem.

\section{Model and Embedding} \label{sec:modAlg}
First, we adopt the following conventions. For a matrix $\mathbf{M}\in\mathbb{R}^{n\times m}$, entry $i,j$ is denoted by $\mathbf{M}_{ij}$. Row $i$ is denoted $M_i^T\in\mathbb{R}^{1\times d}$, where $M_i$ is a column vector. Column $j$ is denoted as $\mathbf{M}_{\cdot j}$ and occasionally we refer to row $i$ as $\mathbf{M}_{i\cdot}$. 

The node set is $[n]=\{1,2,\dotsc,n\}$. For directed graphs edges are ordered pairs of elements in $[n]$. 
For a random graph, the node set is fixed and the edge set is random. The edges are encoded in an adjacency matrix $\mathbf{A}\in\{0,1\}^{n\times n}$. For directed graphs, the entry $\mathbf{A}_{uv}$ is 1 or 0 according as an edge from node $u$ to node $v$ is present or absent in the graph.
We consider graphs with no loops, meaning $\mathbf{A}_{uu}=0$ for all $u\in[n]$.

\subsection{Stochastic Blockmodel} \label{sec:sbm}
Our results are for random graphs distributed according to a stochastic blockmodel \citep{Holland1983,Wang1987}, where each node is a member of exactly one block and the probability of an edge from node $u$ to node $v$ is determined by the block memberships of nodes $u$ and $v$ for all $u,v\in[n]$. The model is parametrized by $\mathbf{P}\in[0,1]^{K\times K}$, and $\rho\in(0,1)^{K}$ with $\sum_{i=1}^{K}\rho_i =1$. $K$ is the number of blocks, which are labeled $1,2,\dotsc,K$. The block memberships of all nodes are determined by the random block membership function $\tau:[n]\mapsto[K]$. 
For all nodes $u\in[n]$ and blocks $i\in[K]$,  $\tau(u)=i$ would mean node $u$ is a member of block $i$; 
node memberships are independent with $\mathbb{P}[\tau(u)=i]=\rho_i$.

The entry $\mathbf{P}_{ij}$ gives the probability of an edge from a node in block $i$ to a node in block $j$ for each $i,j\in[K]$. Conditioned on $\tau$, the entries of $\mathbf{A}$ are independent, and $\mathbf{A}_{uv}$ is a Bernoulli random variable with parameter $\mathbf{P}_{\tau(u),\tau(v)}$ for all $u\neq v\in[n]$.
This gives
\begin{equation}
\begin{split}
\mathbb{P}[\mathbf{A}|\tau] &= \prod_{u \neq v}
  \mathbb{P}[\mathbf{A}_{uv}\,|\, \tau(u), \tau(v)] \\
  &= \prod_{u \neq v} (\mathbf{P}_{\tau(u),\tau(v)})^{\mathbf{A}_{uv}}(1-\mathbf{P}_{\tau(u),\tau(v)})^{1-\mathbf{A}_{uv}},
  \end{split}
\label{eq:PAsbm}
\end{equation}
with the product over all ordered pairs of nodes.

The row $\mathbf{P}_{i\cdot}$ and column $\mathbf{P}_{\cdot i}$ determine the probabilities of the presence of edges incident to a node in block $i$. 
In order that the blocks be distinguishable, we require that different blocks have distinct probabilities so that either $\mathbf{P}_{i\cdot}\neq \mathbf{P}_{j\cdot}$ or $\mathbf{P}_{\cdot i}\neq \mathbf{P}_{\cdot j}$ for all $i\neq j\in[K]$.

Theorem~\ref{thm:main} shows that using our embedding (Section~\ref{sec:embedding})
and a mean square error clustering criterion (Section~\ref{sec:clustCrit}), we are able to accurately assign nodes to blocks, for all but a negligible number of nodes, for graphs distributed according to a stochastic blockmodel. 

\subsection{Random Dot Product Graphs} \label{sec:rdpg}
We present the random dot product graph (RDPG) model to motivate our embedding technique (Section~\ref{sec:embedding}) and provide a second parametrization for stochastic blockmodels (Section ~\ref{sec:sbmRdpg}). Let $\mathbf{X},\mathbf{Y}\in\mathbb{R}^{n\times d}$ be such that $\mathbf{X}=[X_1,X_2,\dotsc,X_n]^T$ and $\mathbf{Y}=[Y_1,Y_2,\dotsc,Y_n]^T$, where $X_u,Y_u\in\mathbb{R}^d$ for all $u\in[n]$. 
The matrices $\mathbf{X}$ and $\mathbf{Y}$ are random and satisfy  $\mathbb{P}[\langle X_u, Y_v\rangle \in [0,1]]=1$ for all $u,v\in[n]$. 
Conditioned on $\mathbf{X}$ and $\mathbf{Y}$, the entries of the adjacency matrix $\mathbf{A}$ are independent and $\mathbf{A}_{uv}$ is a Bernoulli random variable with parameter $\langle X_u, Y_v\rangle$ for all $u\neq v\in[n]$. This gives 
\begin{equation}
  \label{eq:rdpgP}
  \begin{split}
  \mathbb{P}[ \mathbf{A} \,|\, \mathbf{X},\mathbf{Y}] &= \prod_{u \neq v}
  \mathbb{P}[\mathbf{A}_{uv}\,|\, X_u, Y_v] \\
  &= \prod_{u \neq
    v} \langle X_u, Y_v \rangle^{\mathbf{A}_{uv}}(1-\langle X_u, Y_v \rangle)^{1-\mathbf{A}_{uv}},
    \end{split}
\end{equation}
where the product is over all ordered pairs of nodes.

\subsection{Embedding}\label{sec:embedding}

The RDPG model motivates the following embedding.
By an embedding of an adjacency matrix $\mathbf{A}$ we mean
\begin{equation}
(\tilde{\mathbf{X}},\tilde{\mathbf{Y}}) = \argmin_{(\mathbf{X}^\dagger,\mathbf{Y}^\dagger)\in\mathbb{R}^{n\times d}\times\mathbb{R}^{n\times d}} \| \mathbf{A}-\mathbf{X}^\dagger{\mathbf{Y}^\dagger}^T\|_F
\label{eq:argminAD}
\end{equation}
where $d$, the target dimensionality of the embedding, is fixed and known and $\|\nolinebreak\cdot\nolinebreak\|_F$ denotes the Frobenius norm. Though $\tilde{\mathbf{X}}\tilde{\mathbf{Y}}^T$ may be a poor approximation of $\mathbf{A}$, Theorems~\ref{thm:main} and \ref{thm:mainUD} show that such an embedding provides a representation of the nodes which enables clustering of the nodes provided the random graph is distributed according to a stochastic blockmodel. In fact, if a graph is distributed according to an RDPG model then a solution to Eqn.~\ref{eq:argminAD} provides an estimate of the latent vectors given by $\mathbf{X}$ and $\mathbf{Y}$. We do not explore properties of this estimate but instead focus on the stochastic blockmodel.

\cite{Eckart1936} provided the following solution to Eqn.~\ref{eq:argminAD}. Let $\mathbf{A}=\tilde{\mathbf{U}}'\tilde{\bm{\Sigma}}'\tilde{\mathbf{V}}'^T$ be the singular value decomposition of $\mathbf{A}$, where $\tilde{\mathbf{U}}',\tilde{\mathbf{V}}' \in \mathbb{R}^{n\times n}$ are orthogonal and $\tilde{\bm{\Sigma}}'\in\mathbb{R}^{n\times n}$ is diagonal, with diagonals $\sigma_1(\mathbf{A})\geq \sigma_2(\mathbf{A})\geq \dotsb \geq \sigma_n(\mathbf{A}) \geq 0$, the singular values of $\mathbf{A}$. Let $\tilde{\mathbf{U}}\in\mathbb{R}^{n\times d}$ and $\tilde{\mathbf{V}}\in\mathbb{R}^{n\times d}$  be the first $d$ columns of $\tilde{\mathbf{U}}'$ and $\tilde{\mathbf{V}}'$, respectively, and let $\tilde{\bm{\Sigma}}\in\mathbb{R}^{d\times d}$ be the diagonal matrix with diagonals $\sigma_1(\mathbf{A}),\dotsc,\sigma_d(\mathbf{A})$. Eqn.~\ref{eq:argminAD} is solved by $\tilde{\mathbf{X}}=\tilde{\mathbf{U}}\tilde{\bm{\Sigma}}^{1/2}$ and $\tilde{\mathbf{Y}}=\tilde{\mathbf{V}}\tilde{\bm{\Sigma}}^{1/2}$. 

We refer to $(\tilde{\mathbf{X}},\tilde{\mathbf{Y}})$ as the ``scaled adjacency spectral embedding'' of $\mathbf{A}$. We refer to $(\tilde{\mathbf{U}},\tilde{\mathbf{V}})$ as the ``unscaled adjacency spectral embedding'' of $\mathbf{A}$. 
The adjacency spectral embedding is similar to an embedding which is presented in \citet{priebe11:_vertex}. It is also similar to spectral clustering where the decomposition is on the normalized graph Laplacian.

Theorem~\ref{thm:main} uses a clustering of the unscaled adjacency spectral embedding of $\mathbf{A}$ while Corollary~\ref{cor:consXY} extends the result to clustering on the scaled adjacency spectral embedding. Though this embedding is proposed for embedding an adjacency matrix, we use the same procedure to embed other matrices.

\subsection{Clustering Criterion} \label{sec:clustCrit}
We prove that for a graph distributed according to the stochastic blockmodel, we can use the following clustering criterion on the adjacency spectral embedding of $\mathbf{A}$ to accurately assign nodes to blocks.
Let $\mathbf{Z}\in\mathbb{R}^{n\times m}$. We use the following mean square error criterion for  clustering the rows of $\mathbf{Z}$  into $K$ blocks,
\begin{equation}
(\hat{\bm{\psi}},\hat{\tau}) = \argmin_{\bm{\psi};\tau} \sum_{u=1}^n \|Z_u - \psi_{\tau(u)}\|_2^2,
\label{eq:minC}
\end{equation}
where $\hat{\bm{\psi}}\in\mathbb{R}^{K\times m}$,  $\hat{\psi}_i\in\mathbb{R}^{m}$ gives the centroid of block $i$ and $\hat{\tau}:[n]\mapsto[K]$ is the block assignment function. 

Again, note that other computationally less expensive criterion can also be quite effective. Indeed, in Section~\ref{sec:sim}, we achieve misclassification rates which are empirically better than our theoretical bounds using the $K$-means clustering algorithm, which only attempts to solve Eqn.~\ref{eq:minC}. Additionally, other clustering algorithms may prove useful in practice though presently we do not investigate these procedures.

\subsection{Stochastic Blockmodel as RDPG Model}\label{sec:sbmRdpg}
We present another parametrization of a stochastic blockmodel corresponding to the RDPG model. Suppose we have a stochastic blockmodel with $\mathrm{rank}(\mathbf{P})=d$. Then there exist $\bm{\nu},\bm{\mu}\in\mathbb{R}^{K\times d}$ such that $\mathbf{P}=\bm{\nu}\bm{\mu}^T$ and by definition $\mathbf{P}_{ij}=\langle \nu_i, \mu_j \rangle$. Let $\tau:[n]\mapsto[K]$ be the random block membership function. 
%

Let $\mathbf{X}\in\mathbb{R}^{n\times d}$ and $\mathbf{Y}\in\mathbb{R}^{n\times d}$ have row $u$ given by $X_u^T = \nu_{\tau(u)}^T$ and $Y_u^T=\mu_{\tau(u)}^T$, respectively, for all $u$. Then we have
\begin{equation}
\mathbb{P}[\mathbf{A}_{uv}=1]=\mathbf{P}_{\tau(u),\tau(v)}=\langle\nu_{\tau(u)},\mu_{\tau(v)}\rangle=\langle X_u, Y_v\rangle.
\label{eq:PsbmRdpg}
\end{equation} 
In this way, the stochastic blockmodel can be parametrized by $\bm{\nu},\bm{\mu}\in\mathbb{R}^{K\times d}$ and $\rho$ provided that $(\bm{\nu\mu}^T)_{ij}\in[0,1]$ for all $i,j\in[K]$. This viewpoint proves valuable in the analysis and clustering of the adjacency spectral embedding.

Importantly, the distinctness of rows or columns in $\mathbf{P}$ is equivalent to the distinctness of the rows of $\bm{\nu}$ or $\bm{\mu}$. (Indeed note, that for $i\neq j$, $\mathbf{P}_{i\cdot}-\mathbf{P}_{j\cdot}=0$ if and only if $(\nu_i^T-\nu_j^T)\bm{\mu}=0$, but $\mathrm{rank}(\bm{\mu})=d$ so $\nu_i^T=\nu_j^T$. Similarly, $\mathbf{P}_{\cdot i}=\mathbf{P}_{\cdot j}$ if and only if $\mu_i=\mu_j$.) Also note, we can take $(\bm{\nu},\bm{\mu})$ as the adjacency spectral embedding of $\mathbf{P}$ with target dimensionality $\mathrm{rank}(\mathbf{P})$ to get such a representation from any given $\mathbf{P}$.

\section{Main Results} \label{sec:mainResults}

\subsection{Notation} \label{sec:notation}

We use the following notation for the remainder of this paper. Let $\mathbf{P}\in [0,1]^{K\times K}$ and $\rho\in(0,1)^K$ be a vector with positive entries summing to unity. Suppose $\mathrm{rank}(\mathbf{P})=d$. 
Let $\bm{\nu}\bm{\mu}^T=\mathbf{P}$ with $\bm{\nu},\bm{\mu}\in\mathbb{R}^{K\times d}$. 
We now define the following constants not depending on $n$:
\begin{itemize}
        \item $\alpha>0$ such that all eigenvalues of $\bm{\nu}^T\bm{\nu}$ and $\bm{\mu}^T\bm{\mu}$ are greater than $\alpha$;
        \item $\beta>0$ such that $\beta<\|\nu_i-\nu_j\|$ or $\beta<\|\mu_i-\mu_j\|$ for all $i\neq j$;
        \item $\gamma>0$ such that $\gamma<\rho_i$ for all $i\in[K]$.
\end{itemize}

We consider a sequence of 
random adjacency matrices $\mathbf{A}^{(n)}$ with node set $[n]$ for $n\in\{1,2,\dotsc\}$. The edges are distributed according to a stochastic blockmodel with parameters $\mathbf{P}$ and $\rho$.
Let $\tau^{(n)}:[n]\mapsto[K]$ be the random block membership function, which induces the matrices $\mathbf{X}^{(n)},\mathbf{Y}^{(n)}\in\mathbb{R}^{n\times d}$ as in Section~\ref{sec:sbmRdpg}. Let $n_i = |\{u:\tau(u)=i\}|$ be the size of block $i$. 

Let $\mathbf{XY}^T=\mathbf{U}\bm{\Sigma}\mathbf{V}$ be the singular value of decomposition, with $\mathbf{U},\mathbf{V}\in\Re^{n\times d}$ and $\bm{\Sigma}\in\Re^{d\times d}$, so that  $(\mathbf{U},\mathbf{V})$ is the unscaled spectral embedding of the $\mathbf{XY}^T$.
 Let $(\tilde{\mathbf{X}},\tilde{\mathbf{Y}})$ be the adjacency spectral embedding of $\mathbf{A}$ and let $(\tilde{\mathbf{U}},\tilde{\mathbf{V}})$ be the unscaled adjacency spectral embedding of $\mathbf{A}$. Finally, let $\mathbf{W}\in\mathbb{R}^{n\times 2d}$ be the concatenation  $[\mathbf{U}|\mathbf{V}]$ and similarly $\tilde{\mathbf{W}} = [\tilde{\mathbf{U}}|\tilde{\mathbf{V}}]$. 

\subsection{Main Theorem}\label{sec:mainThm}
The main contribution of this paper is the following consistency
result in terms of the estimation of the block memberships for each node based on the block assignment function $\hat{\tau}$ which assigns blocks based on $\tilde{\mathbf{W}}$. In the following,
an event occurs ``almost always'' if with probability 1 the event occurs for all but finitely many $n\in\{1,2,\dotsc\}$. 
\begin{theorem}
  \label{thm:main}
Under the conditions of Section~\ref{sec:notation}, suppose that the number of blocks $K$ and the latent vector dimension $d$ are known.  Let
  $\hat{\tau}^{(n)} \colon V \mapsto
  [K]$ be the block assignment function according to a  clustering of the rows of
  $\tilde{\mathbf{W}}^{(n)}$ satisfying Eqn.~\ref{eq:minC}. Let $\mathcal{S}_{K}$ be the set of permutations
  on $[K]$. It almost always holds that
  \begin{equation}
    \label{eq:consistency1}
     \min_{\pi \in
       \mathcal{S}_K} | \{ u\in V \colon \tau(u) \not =
     \pi(\hat{\tau}(u)) \}|\leq \frac{2^3 3^2 6}{\alpha^5\beta^2\gamma^5}\log n.
   \end{equation}
\end{theorem}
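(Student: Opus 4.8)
The plan is to reduce consistency of the empirical clustering to two ingredients: (i) a deterministic geometric fact about how the true unscaled embedding $\mathbf{W}$ encodes block structure, and (ii) a spectral perturbation bound controlling $\|\tilde{\mathbf{W}} - \mathbf{W}\mathbf{R}\|_F$ for some orthogonal $\mathbf{R}$, in terms of $\|\mathbf{A} - \mathbf{X}\mathbf{Y}^T\|$. First I would establish the geometry of $\mathbf{W}$: since $\mathbf{X}\mathbf{Y}^T = \mathbf{P}_{\tau(\cdot),\tau(\cdot)}$ has rank $d$ and each row of $\mathbf{W}$ depends only on $\tau(u)$, there are exactly $K$ distinct rows $w_1,\dots,w_K \in \Re^{2d}$. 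Using the definitions of $\alpha,\beta,\gamma$ and the block sizes $n_i$ (which concentrate around $\rho_i n \ge \gamma n$ by a Chernoff bound, so almost always $n_i \ge \gamma n / 2$ say), I would show the distinct rows are separated: $\|w_i - w_j\|^2 \ge c/(n)$ for a constant $c$ built from $\alpha,\beta,\gamma$ — the $1/n$ arising because $\mathbf{U},\mathbf{V}$ are orthonormal so their rows have squared norm on the order of $1/n$. The key linear-algebra step here is relating singular vectors of $\mathbf{XY}^T$ to the matrices $\bm{\nu},\bm{\mu}$: writing $\mathbf{X} = \mathbf{M}\bm{\nu}$, $\mathbf{Y} = \mathbf{M}\bm{\mu}$ where $\mathbf{M}\in\{0,1\}^{n\times K}$ is the membership matrix, one gets a clean formula for $\bm{\Sigma}$ and $\mathbf{W}$ whose singular values are bounded below by something like $\alpha\gamma n$, which feeds the Davis–Kahan step.

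Second, I would invoke the spectral bound. The perturbation $\|\mathbf{A} - \mathbf{X}\mathbf{Y}^T\|_{\mathrm{op}}$ is, almost always, $O(\sqrt{n})$ (a standard concentration result for the spectral norm of a random matrix with independent bounded entries, e.g. via a matrix Bernstein or $\varepsilon$-net argument; I expect the appendix supplies this as one of the "key technical results"). Combining with the lower bound $\sigma_d(\mathbf{XY}^T) \gtrsim \alpha\gamma n$ via a Davis–Kahan / Wedin-type theorem for the concatenated singular subspace gives $\min_{\mathbf{R}}\|\tilde{\mathbf{W}} - \mathbf{W}\mathbf{R}\|_F^2 = O\!\big(\|\mathbf{A}-\mathbf{X}\mathbf{Y}^T\|_{\mathrm{op}}^2 / \sigma_d(\mathbf{XY}^T)^2\big) = O\!\big(n / (\alpha\gamma n)^2\big) = O\!\big(1/(\alpha^2\gamma^2 n)\big)$, with the constant tracking the numerical factors that ultimately produce the $2^3 3^2 6$ in the bound. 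Actually, to land a $\log n$ (not constant) bound on the number of errors, I anticipate the sharper statement needs the $\log n$ factor from the concentration of $\|\mathbf{A}-\mathbf{X}\mathbf{Y}^T\|_{\mathrm{op}}$ holding almost always (Borel–Cantelli forces a $\log n$ inflation), so the Frobenius bound is really $O(\log n / (\alpha^2\gamma^2 n))$.

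Third, I would run the standard "counting" argument for $K$-means under a perturbation plus margin. Let $N$ be the number of nodes $u$ with $\|\tilde{W}_u - (\mathbf{W}\mathbf{R})_u\| > \|w_{\tau(u)} - w_j\|/2$ for some $j\ne\tau(u)$ — "bad" nodes whose embedded point has drifted more than half the inter-centroid gap. Since each such drift is at least $\sqrt{c/(4n)}$, we get $N \cdot c/(4n) \le \sum_u \|\tilde{W}_u - (\mathbf{W}\mathbf{R})_u\|^2 = O(\log n/(\alpha^2\gamma^2 n))$, hence $N = O(\log n)$ with the advertised constant. Then I'd argue that the optimal $K$-means solution $\hat\tau$ can misclassify only $O(N)$ nodes: on the "good" set, the optimal centroids $\hat\psi$ must be close to the true centroids (otherwise the objective, which the true centroids nearly achieve, would be beaten), and close centroids plus small per-point perturbation force correct assignment up to a global relabeling $\pi\in\mathcal{S}_K$. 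The main obstacle I expect is this last combinatorial step — controlling the optimal $K$-means partition rather than an idealized one. One must rule out degenerate optima (e.g. two true blocks merged into one cluster, a third split), which requires that the good points in each true block are numerous enough ($\ge n_i - O(\log n) \ge \gamma n/2 - O(\log n)$) and well-separated that any $K$-clustering beating the true-centroid cost is a small perturbation of it; this is where the block-size concentration and the separation $\beta$ are both essential, and where the bookkeeping of constants is most delicate.
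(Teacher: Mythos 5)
Your overall architecture (row separation for the population embedding, a Davis--Kahan/Wedin subspace perturbation bound, then a counting argument) matches the paper's, and your first two ingredients are essentially right. One route difference worth noting: you propose controlling $\|\mathbf{A}-\mathbf{X}\mathbf{Y}^T\|_{\mathrm{op}}$ via matrix concentration, whereas the paper never needs an operator-norm bound; following \citet{rohe10:_spect_stoch_block_model} it bounds $\|\mathbf{A}\mathbf{A}^T-\mathbf{X}\mathbf{Y}^T(\mathbf{X}\mathbf{Y}^T)^T\|_F \le \sqrt{3}\,n^{3/2}\sqrt{\log n}$ entrywise by Hoeffding plus Borel--Cantelli, and applies Davis--Kahan to the symmetric matrices $\mathbf{A}\mathbf{A}^T$ and $\mathbf{A}^T\mathbf{A}$ with eigengap $\alpha^2\gamma^2 n^2$, taking $\mathbf{R}$ as the direct sum of the two rotations. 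Both give $\min_{\mathbf{R}}\|\tilde{\mathbf{W}}-\mathbf{W}\mathbf{R}\|_F = O(\sqrt{\log n / n})$, so this is a legitimate alternative, only requiring heavier concentration machinery.

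The genuine gap is your third step, and you have flagged it yourself: you count ``bad'' nodes by the drift $\|\tilde{W}_u-(\mathbf{W}\mathbf{R})_u\|$ and then must separately argue that the \emph{optimal} $K$-means partition misclassifies only $O(N)$ nodes, via ``optimal centroids are close to true centroids'' and ruling out merged/split clusters. As written this is a sketch, not an argument, and it is exactly the step where your plan does not close. The paper avoids the difficulty entirely with one trick you are missing: measure errors through the fitted-centroid matrix $\mathbf{C}$ (row $u$ equal to $\hat{\psi}_{\hat{\tau}(u)}$) rather than through $\tilde{\mathbf{W}}$. Since $\mathbf{W}\mathbf{R}$ has at most $K$ distinct rows it is feasible for the criterion in Eqn.~\ref{eq:minC}, so optimality gives $\|\mathbf{C}-\tilde{\mathbf{W}}\|_F \le \|\mathbf{W}\mathbf{R}-\tilde{\mathbf{W}}\|_F$, and the triangle inequality yields $\|\mathbf{C}-\mathbf{W}\mathbf{R}\|_F \le 2\,\|\mathbf{W}\mathbf{R}-\tilde{\mathbf{W}}\|_F$. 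Counting rows with $\|C_u-W_u\mathbf{R}\|>r$ for $r=\tfrac{\beta}{3}\sqrt{\alpha\gamma}\,n^{-1/2}$ (one third of the gap, so the $K$ balls are disjoint) bounds the number of such rows by $\tfrac{2^3 3^2 6}{\alpha^5\beta^2\gamma^5}\log n$; then, because $n_i>\gamma n$ dwarfs this count, each ball contains exactly one distinct row of $\mathbf{C}$, which simultaneously rules out the degenerate optima you worry about and shows every ``good'' node is correctly assigned up to a single permutation $\pi\in\mathcal{S}_K$. Without this (or an equally explicit substitute for your centroid-stability claim, with constants tracked), your proposal does not yet prove the stated bound.
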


To prove this theorem, we first provide a bound on the Frobenius norm of 
${\mathbf{A}}{\mathbf{A}}^T-(\mathbf{X}\mathbf{Y}^{T})(\mathbf{X}\mathbf{Y}^{T})^T$, following \citet{rohe10:_spect_stoch_block_model}. Using this results and properties of the stochastic blockmodel, we then find a lower bound for the smallest non-zero singular value of $\mathbf{XY}^T$ and the corresponding singular value of $\mathbf{A}$. This enables us to apply the Davis-Kahan Theorem \citep{davis70} to show that the unscaled adjacency spectral embedding of $\mathbf{A}$ is approximately a rotation of the unscaled adjacency spectral embedding of $\mathbf{XY}^T$. 

Finally, we lower bound the distances between the at most $K$ distinct rows of $\mathbf{U}$ and $\mathbf{V}$. These gaps, together with the good approximation by the embedding of $A$ is sufficient to prove consistency of the mean square error clustering of the embedded vectors. Most results, except the important Proposition~\ref{prop:froConv} and the main theorem, are proved in the Appendix.

\begin{proposition}
        \label{prop:froConv}
Let $\mathbf{Q}^{(n)}\in [0,1]^{n\times n}$ be a sequence of random matrices and let $\mathbf{A}^{(n)}\in\{0,1\}^{n\times n}$ be a sequence of random adjacency matrices corresponding to a sequence of random graphs on $n$ nodes for $n\in\{1,2,\dotsc\}$. Suppose the probability of an edge from node $u$ to node $v$ is given by $\mathbf{Q}^{(n)}_{uv}$ and that the presence of edges are conditionally independent given $\mathbf{Q}^{(n)}$. Then the following holds almost always:
\begin{equation}
\|\mathbf{A}^{(n)}{\mathbf{A}^{(n)}}^T - \mathbf{Q}^{(n)}{\mathbf{Q}^{(n)}}^T \|_F \leq \sqrt{3}n^{3/2}\sqrt{\log n}.
\label{eq:froBound}
\end{equation}
\end{proposition}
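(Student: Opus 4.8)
The plan is to bound $\|\mathbf{M}^{(n)}\|_F$ entry by entry, where $\mathbf{M}^{(n)} := \mathbf{A}^{(n)}{\mathbf{A}^{(n)}}^T - \mathbf{Q}^{(n)}{\mathbf{Q}^{(n)}}^T$, using a conditional Hoeffding bound on each off-diagonal entry, a union bound over the $n^2$ entries, and the Borel--Cantelli lemma; this follows the strategy of \citet{rohe10:_spect_stoch_block_model}. Drop the superscript $(n)$. First dispense with the diagonal: every entry of $\mathbf{A}\mathbf{A}^T$ and of $\mathbf{Q}\mathbf{Q}^T$ lies in $[0,n]$, so every entry of $\mathbf{M}$ lies in $[-n,n]$; in particular $\sum_{i=1}^n \mathbf{M}_{ii}^2 \le n^3$, an $o(n^3\log n)$ contribution that is harmless.

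For the off-diagonal part, fix $i\neq j$ and write $\mathbf{M}_{ij} = \sum_{k=1}^n(\mathbf{A}_{ik}\mathbf{A}_{jk} - \mathbf{Q}_{ik}\mathbf{Q}_{jk})$. Conditioned on $\mathbf{Q}$, the $n$ summands are independent (edges are conditionally independent, and $i\neq j$ keeps $\mathbf{A}_{ik}$ and $\mathbf{A}_{jk}$ distinct entries) with conditional mean $0$. The quantitative point to exploit is that, given $\mathbf{Q}$, each summand equals a $\{0,1\}$-valued random variable ($\mathbf{A}_{ik}\mathbf{A}_{jk}$) minus the constant $\mathbf{Q}_{ik}\mathbf{Q}_{jk}$, hence varies over an interval of length at most $1$ — not the naive length $2$ suggested by $[-1,1]$. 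Hoeffding's inequality then gives, for every realization of $\mathbf{Q}$ and hence unconditionally,
\[
\mathbb{P}\big(|\mathbf{M}_{ij}| > t\big) \le 2\exp\!\big(-2t^2/n\big).
\]
(If one additionally imposes no loops, $\mathbf{A}_{uu}=0$, the two terms with $k\in\{i,j\}$ become deterministic and merely shift the conditional mean by at most $2$, absorbed harmlessly below.) Taking $t = t_n := \sqrt{2n\log n}$ and a union bound over the at most $n^2$ pairs with $i\neq j$ yields
\[
\mathbb{P}\Big(\max_{i\neq j}|\mathbf{M}_{ij}| > \sqrt{2n\log n}\Big) \le n^2\cdot 2\exp(-4\log n) = 2n^{-2}.
\]

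Since $\sum_n 2n^{-2} < \infty$, Borel--Cantelli shows that, almost surely, for all but finitely many $n$ we have $\max_{i\neq j}|\mathbf{M}_{ij}| \le \sqrt{2n\log n}$. On that event,
\[
\|\mathbf{M}\|_F^2 = \sum_i \mathbf{M}_{ii}^2 + \sum_{i\neq j}\mathbf{M}_{ij}^2 \le n^3 + (n^2-n)\cdot 2n\log n \le 3n^3\log n
\]
for $n\ge 3$ (using $\log n\ge 1$); taking square roots gives Eqn.~\ref{eq:froBound}. The main thing to get right is the Hoeffding range: using the crude interval $[-1,1]$ would give $2\exp(-t^2/(2n))$, whose union bound over $n^2$ entries is \emph{not} summable, so the "almost always" conclusion would fail; exploiting $\mathbf{A}_{ik}\mathbf{A}_{jk}\in\{0,1\}$ restores the needed margin. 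The remaining delicacies — applying the concentration bound conditionally on the random matrix $\mathbf{Q}$ and then integrating, and fitting the diagonal contribution and any loop-exclusion shift inside the target constant $\sqrt{3}$ — are routine constant-bookkeeping.
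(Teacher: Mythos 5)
Your proof is correct and follows essentially the same route as the paper's: an entrywise Hoeffding bound conditional on $\mathbf{Q}$ with range-$1$ summands (yielding $2n^{-4}$ per off-diagonal entry and $2n^{-2}$ after the union bound), a crude $n^2$ bound on each diagonal entry, and Borel--Cantelli, with the loop-exclusion terms absorbed into the constant exactly as the paper does via the $-\mathbf{Q}_{uu}\mathbf{Q}_{vu}-\mathbf{Q}_{uv}\mathbf{Q}_{vv}$ shift. The only differences are cosmetic bookkeeping of the lower-order terms inside $3n^3\log n$.
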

\begin{proof}
For ease of exposition, we dropped the index $n$ from $\mathbf{Q}^{(n)}$. 
Note that, conditioned on $\mathbf{Q}$, $\mathbf{A}_{uw}$ and $\mathbf{A}_{vw}$ are independent Bernoulli random variables for all $w\in[n]$ provided $u\neq v$. For each $w\notin\{u,v\}$, $\mathbf{A}_{uw}\mathbf{A}_{vw}$ is a conditionally independent Bernoulli with parameter $\mathbf{Q}_{uw}\mathbf{Q}_{vw}$. For $u\neq v$, we have
\begin{equation}
\begin{split}
\mathbf{A}\mathbf{A}^T_{uv}-\mathbf{Q}\mathbf{Q}^T_{uv} =& \sum_{w\notin \{u,v\}} (\mathbf{A}_{uw}\mathbf{A}_{vw}- \mathbf{Q}_{uw} \mathbf{Q}_{vw}) \\
&- \mathbf{Q}_{uu}\mathbf{Q}_{vu}-\mathbf{Q}_{uv}\mathbf{Q}_{vv}.
\end{split}
\label{eq:A-Pij}
\end{equation}
Thus, by Hoeffding's inequality,
\begin{equation}
\begin{split}
\mathbb{P}[(\mathbf{A}\mathbf{A}^T_{uv}-\mathbf{Q}\mathbf{Q}^T_{uv})^2\geq 2(n-2)\log n+2n+4\ |\ \mathbf{Q}]\leq 2 n^{-4}.
\end{split}
\label{eq:Hoefding}
\end{equation}
We can integrate over all choices of $\mathbf{Q}$ so that Eqn.~\ref{eq:Hoefding} holds unconditionally.

For the diagonal entries,  $(\mathbf{A}\mathbf{A}^T_{uu}-\mathbf{Q}\mathbf{Q}^T_{uu})^2 \leq n^2$ always.
The diagonal terms and the $2n+4$ terms from equation~\ref{eq:Hoefding} all sum to at most $3n^3+4n^2\leq n^3\log n$ for $n$ large enough. 
Combining these inequalities we get the inequality
\begin{equation}
\mathbb{P}[\|\mathbf{A}\mathbf{A}^T - \mathbf{Q}\mathbf{Q}^T\|_F^2 \geq 3n^3 \log n] \leq 2 n^{-2}.
\label{eq:froDiff1}
\end{equation}
Applying the Borel-Cantelli Lemma gives the result.
\end{proof}

Taking $\mathbf{Q}=\mathbf{XY}^T$ gives the following immediate corollary.
\begin{corollary} \label{cor:froBound}
It almost always holds that 
\begin{equation}
\|\mathbf{AA}^T- \mathbf{XY}^T(\mathbf{XY}^T)^T\|_F \leq \sqrt{3}n^{3/2}\sqrt{\log n}
\label{eq:froBound2}
\end{equation}
 and 
 \begin{equation}
\|\mathbf{A}^T\mathbf{A}- (\mathbf{XY}^T)^T\mathbf{XY}^T\|_F\leq \sqrt{3}n^{3/2}\sqrt{\log n}.
\label{eq:froBound3}
\end{equation} 
\end{corollary}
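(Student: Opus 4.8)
The plan is to obtain both bounds as direct instances of Proposition~\ref{prop:froConv}, with no new estimates required. For Eqn.~\ref{eq:froBound2}, I would take $\mathbf{Q}^{(n)} = \mathbf{X}^{(n)}{\mathbf{Y}^{(n)}}^T$ and first check the hypotheses: by the reparametrization of the stochastic blockmodel as an RDPG model in Section~\ref{sec:sbmRdpg}, $(\mathbf{X}\mathbf{Y}^T)_{uv} = \langle X_u, Y_v\rangle = \mathbf{P}_{\tau(u),\tau(v)} \in [0,1]$, so $\mathbf{Q}^{(n)} \in [0,1]^{n\times n}$; and, conditioned on $\tau^{(n)}$, the entries of $\mathbf{A}^{(n)}$ are independent Bernoulli variables with $\mathbf{A}_{uv}$ having parameter $\mathbf{P}_{\tau(u),\tau(v)} = \mathbf{Q}^{(n)}_{uv}$. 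Since this conditional law depends on $\tau^{(n)}$ only through $\mathbf{Q}^{(n)}$, the presence of edges is conditionally independent given $\mathbf{Q}^{(n)}$ with exactly the prescribed edge probabilities, so Proposition~\ref{prop:froConv} applies verbatim and yields Eqn.~\ref{eq:froBound2}.

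For Eqn.~\ref{eq:froBound3} I would apply the same proposition to the ``reversed'' graph. Put $\mathbf{B}^{(n)} = {\mathbf{A}^{(n)}}^T$ and $\mathbf{R}^{(n)} = (\mathbf{X}^{(n)}{\mathbf{Y}^{(n)}}^T)^T$. Then $\mathbf{B}^{(n)}\in\{0,1\}^{n\times n}$ is the adjacency matrix of a loopless random graph whose edge set is conditionally independent given $\mathbf{R}^{(n)}$, with $\mathbb{P}[\mathbf{B}_{uv}=1] = \mathbb{P}[\mathbf{A}_{vu}=1] = (\mathbf{X}\mathbf{Y}^T)_{vu} = \mathbf{R}^{(n)}_{uv} \in [0,1]$, so the hypotheses of Proposition~\ref{prop:froConv} again hold. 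The proposition gives, almost always, $\|\mathbf{B}^{(n)}{\mathbf{B}^{(n)}}^T - \mathbf{R}^{(n)}{\mathbf{R}^{(n)}}^T\|_F \leq \sqrt{3}\,n^{3/2}\sqrt{\log n}$; since $\mathbf{B}{\mathbf{B}}^T = \mathbf{A}^T\mathbf{A}$ and $\mathbf{R}{\mathbf{R}}^T = (\mathbf{XY}^T)^T\mathbf{XY}^T$, this is precisely Eqn.~\ref{eq:froBound3}.

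Finally I would combine the two conclusions. Each is an event of probability one under which the corresponding bound fails for at most finitely many $n$; the intersection of the two events again has probability one, and on it both bounds hold for all but finitely many $n$, which is exactly the ``almost always'' claim. I do not expect any real obstacle here: the only point that warrants a sentence of care is the reading of the conditional-independence hypothesis of Proposition~\ref{prop:froConv}, namely that conditioning on $\mathbf{Q}^{(n)}$ (rather than on $\tau^{(n)}$) still gives independent Bernoulli edges, which holds because the SBM conditional law of $\mathbf{A}^{(n)}$ given $\tau^{(n)}$ factors into Bernoullis whose parameters are exactly the entries of $\mathbf{Q}^{(n)}$.
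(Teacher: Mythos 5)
Your proposal is correct and matches the paper's route: the paper obtains the corollary immediately by taking $\mathbf{Q}=\mathbf{XY}^T$ in Proposition~\ref{prop:froConv}, with the second bound following by the same symmetry/transposition argument you spell out. Your extra care in verifying the conditional-independence hypothesis and in intersecting the two probability-one events is fine but not a departure from the paper's argument.
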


The next two results provide bounds on the singular values of $\mathbf{XY}^T$ and $\mathbf{A}$ based on lower bounds for the eigenvalues of $\mathbf{P}$ and the block membership probabilities.
\begin{lemma}
\label{lem:singBoundXY}
It almost always holds that $\alpha\gamma n\leq \sigma_d(\mathbf{XY}^T)$ and it always holds that $\sigma_{d+1}(\mathbf{XY}^T)=0$ and $\sigma_1(\mathbf{XY^T})\leq n$.
\end{lemma}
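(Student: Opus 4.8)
The plan is to treat the three assertions separately, the first two being immediate and the third carrying the content. Since $\mathbf{XY}^T$ is the product of an $n\times d$ matrix and a $d\times n$ matrix, $\mathrm{rank}(\mathbf{XY}^T)\le d$, so $\sigma_{d+1}(\mathbf{XY}^T)=0$ identically. For the upper bound, each entry of $\mathbf{XY}^T$ equals $\langle X_u,Y_v\rangle=\mathbf{P}_{\tau(u),\tau(v)}\in[0,1]$, so $\sigma_1(\mathbf{XY}^T)\le\|\mathbf{XY}^T\|_F\le\sqrt{n^2}=n$ always.

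For the lower bound I would first rewrite the Gram matrices in block form. With $\mathbf{D}=\mathrm{diag}(n_1,\dots,n_K)$, the fact that row $u$ of $\mathbf{X}$ is $\nu_{\tau(u)}^T$ gives $\mathbf{X}^T\mathbf{X}=\sum_{i=1}^K n_i\,\nu_i\nu_i^T=\bm{\nu}^T\mathbf{D}\bm{\nu}$, and similarly $\mathbf{Y}^T\mathbf{Y}=\bm{\mu}^T\mathbf{D}\bm{\mu}$. Since $\mathbf{D}\succeq(\min_i n_i)\,I$ and, by the definition of $\alpha$, $\bm{\nu}^T\bm{\nu}\succeq\alpha I$ and $\bm{\mu}^T\bm{\mu}\succeq\alpha I$, we obtain $\mathbf{X}^T\mathbf{X}\succeq\alpha(\min_i n_i)\,I$ and $\mathbf{Y}^T\mathbf{Y}\succeq\alpha(\min_i n_i)\,I$. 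Now $\sigma_d(\mathbf{XY}^T)^2$ is the smallest eigenvalue of $(\mathbf{XY}^T)(\mathbf{XY}^T)^T=\mathbf{X}(\mathbf{Y}^T\mathbf{Y})\mathbf{X}^T$ among its (at most $d$) nonzero eigenvalues; these nonzero eigenvalues coincide with those of the $d\times d$ matrix $(\mathbf{X}^T\mathbf{X})(\mathbf{Y}^T\mathbf{Y})$, which in turn coincide with those of the symmetric positive semidefinite matrix $(\mathbf{X}^T\mathbf{X})^{1/2}(\mathbf{Y}^T\mathbf{Y})(\mathbf{X}^T\mathbf{X})^{1/2}$. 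Applying the elementary inequality $\lambda_{\min}(\mathbf{S}^{1/2}\mathbf{T}\mathbf{S}^{1/2})\ge\lambda_{\min}(\mathbf{S})\,\lambda_{\min}(\mathbf{T})$ for positive semidefinite $\mathbf{S},\mathbf{T}$ yields $\sigma_d(\mathbf{XY}^T)^2\ge\alpha^2(\min_i n_i)^2$, i.e. $\sigma_d(\mathbf{XY}^T)\ge\alpha\min_i n_i$.

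It then remains to show that $\min_i n_i\ge\gamma n$ almost always. Each $n_i$ is a sum of $n$ i.i.d.\ Bernoulli$(\rho_i)$ variables with $\rho_i>\gamma$, so by Hoeffding's inequality $\mathbb{P}[n_i<\gamma n]\le\mathbb{P}[n_i-\rho_i n\le-(\rho_i-\gamma)n]\le e^{-2(\rho_i-\gamma)^2 n}$, which is summable in $n$; a union bound over the finitely many blocks together with Borel--Cantelli gives $\min_i n_i\ge\gamma n$ for all but finitely many $n$, with probability one. Combining this with the previous paragraph gives $\alpha\gamma n\le\sigma_d(\mathbf{XY}^T)$ almost always. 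The only step requiring care is the linear-algebra reduction identifying the nonzero singular values of the product $\mathbf{XY}^T$ with the eigenvalues of a $d\times d$ positive semidefinite matrix and correctly invoking the product bound for $\lambda_{\min}$; the probabilistic part and the two "always" claims are routine.
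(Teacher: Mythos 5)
Your proof is correct and follows essentially the same route as the paper: reduce to the $d\times d$ Gram matrices $\mathbf{X}^T\mathbf{X}$ and $\mathbf{Y}^T\mathbf{Y}$, lower-bound their smallest eigenvalues by $\alpha\gamma n$ using $n_i\geq\gamma n$ (which holds almost always), and then bound the smallest nonzero singular value of $\mathbf{XY}^T$ via the product of these eigenvalue bounds. The only differences are cosmetic: you prove the product bound by symmetrizing to $(\mathbf{X}^T\mathbf{X})^{1/2}(\mathbf{Y}^T\mathbf{Y})(\mathbf{X}^T\mathbf{X})^{1/2}$ rather than citing \citet{Zhang2006}, you spell out the Hoeffding/Borel--Cantelli argument that the paper leaves implicit, and you bound $\sigma_1$ by the Frobenius norm instead of row sums.
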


\begin{corollary} \label{cor:singBoundA}
It almost always holds that 
\begin{equation}
\alpha\gamma n\leq \sigma_d(\mathbf{A}) \text{ and } \sigma_{d+1}(\mathbf{A})\leq 3^{1/4}n^{3/4}\log^{1/4}n
\label{eq:singBoundA}
\end{equation}
and it always holds that $\sigma_1(\mathbf{A})\leq n$.
\end{corollary}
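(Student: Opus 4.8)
The plan is to derive Corollary~\ref{cor:singBoundA} from Lemma~\ref{lem:singBoundXY} and Corollary~\ref{cor:froBound} via Weyl's perturbation inequality for singular values. First I would note that for any two matrices $\mathbf{M},\mathbf{N}$ of the same dimensions, $|\sigma_k(\mathbf{M}) - \sigma_k(\mathbf{N})| \leq \|\mathbf{M}-\mathbf{N}\|_{\mathrm{op}} \leq \|\mathbf{M}-\mathbf{N}\|_F$ for every index $k$; applying this with $\mathbf{M}=\mathbf{A}\mathbf{A}^T$ and $\mathbf{N}=\mathbf{X}\mathbf{Y}^T(\mathbf{X}\mathbf{Y}^T)^T$ gives, for all $k$,
\begin{equation}
|\sigma_k(\mathbf{A}\mathbf{A}^T) - \sigma_k(\mathbf{X}\mathbf{Y}^T(\mathbf{X}\mathbf{Y}^T)^T)| \leq \sqrt{3}\,n^{3/2}\sqrt{\log n}
\end{equation}
almost always, by Corollary~\ref{cor:froBound}. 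Since $\sigma_k(\mathbf{A}\mathbf{A}^T) = \sigma_k(\mathbf{A})^2$ and likewise $\sigma_k(\mathbf{X}\mathbf{Y}^T(\mathbf{X}\mathbf{Y}^T)^T) = \sigma_k(\mathbf{X}\mathbf{Y}^T)^2$ (both matrices being positive semidefinite, their singular values are the squares of the singular values of the respective factors), the task reduces to converting a bound on the gap between squares into a bound on the singular values themselves.

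For the lower bound on $\sigma_d(\mathbf{A})$: by Lemma~\ref{lem:singBoundXY}, almost always $\sigma_d(\mathbf{X}\mathbf{Y}^T)^2 \geq \alpha^2\gamma^2 n^2$, so $\sigma_d(\mathbf{A})^2 \geq \alpha^2\gamma^2 n^2 - \sqrt{3}\,n^{3/2}\sqrt{\log n}$. Since $n^{3/2}\sqrt{\log n} = o(n^2)$, for $n$ large enough the right-hand side is at least, say, $\tfrac{1}{2}\alpha^2\gamma^2 n^2$ — but in fact to get the clean constant $\alpha\gamma n$ stated in the corollary one can instead observe that for $n$ sufficiently large $\alpha^2\gamma^2 n^2 - \sqrt{3}\,n^{3/2}\sqrt{\log n} \geq \alpha^2\gamma^2 n^2 (1-o(1))$, and one may need to absorb the $(1-o(1))$ by noting the statement is ``almost always'' so finitely many $n$ can be discarded; alternatively, replacing $\alpha$ by any slightly smaller constant works, but since $\alpha$ is only required to be a strict lower bound on the eigenvalues, I would simply note that for $n$ large the bound $\alpha\gamma n \leq \sigma_d(\mathbf{A})$ holds because $\sqrt{\alpha^2\gamma^2 n^2 - o(n^2)} \geq \alpha\gamma n \sqrt{1-o(1)}$ and choosing the implicit $\alpha$ with a tiny slack at the start (the constants in Section~\ref{sec:notation} are ``greater than'' the eigenvalues, so there is room). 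For the upper bound on $\sigma_{d+1}(\mathbf{A})$: Lemma~\ref{lem:singBoundXY} gives $\sigma_{d+1}(\mathbf{X}\mathbf{Y}^T) = 0$ exactly, hence $\sigma_{d+1}(\mathbf{A})^2 \leq 0 + \sqrt{3}\,n^{3/2}\sqrt{\log n} = \sqrt{3}\,n^{3/2}\sqrt{\log n}$, and taking square roots yields $\sigma_{d+1}(\mathbf{A}) \leq 3^{1/4} n^{3/4}\log^{1/4} n$ exactly as claimed. Finally, $\sigma_1(\mathbf{A}) \leq n$ always: the entries of $\mathbf{A}$ lie in $\{0,1\}$, so every row and every column has $\ell_1$-norm at most $n$, whence the operator norm is at most $n$ (e.g. $\sigma_1(\mathbf{A})^2 = \sigma_1(\mathbf{A}\mathbf{A}^T) \leq \|\mathbf{A}\mathbf{A}^T\|_\infty \cdot \|\mathbf{A}\mathbf{A}^T\|_1 \leq n^2$, or simply $\|\mathbf{A}\|_{\mathrm{op}} \leq \sqrt{\|\mathbf{A}\|_1 \|\mathbf{A}\|_\infty} \leq n$).

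The only real subtlety, and the step I expect to need the most care, is the lower-bound constant: propagating $\sigma_d(\mathbf{X}\mathbf{Y}^T) \geq \alpha\gamma n$ through the square-root after subtracting the lower-order Frobenius error does not literally give $\sigma_d(\mathbf{A}) \geq \alpha\gamma n$ for all large $n$ unless one is slightly careful about slack in $\alpha$. Since everything here is asserted ``almost always,'' the clean fix is to observe that $\sqrt{3}\,n^{3/2}\sqrt{\log n}/(\alpha^2\gamma^2 n^2) \to 0$, so for all $n$ beyond some threshold $\sigma_d(\mathbf{A})^2 \geq \alpha^2\gamma^2 n^2 - \sqrt{3}\,n^{3/2}\sqrt{\log n} \geq (\alpha\gamma n)^2 \cdot (1 - \varepsilon_n)$ with $\varepsilon_n \to 0$; choosing the constant $\alpha$ in Section~\ref{sec:notation} to be, from the outset, strictly less than the true eigenvalue infimum leaves exactly the room needed to absorb $\sqrt{1-\varepsilon_n} \to 1$, and the finitely many exceptional $n$ are irrelevant to an ``almost always'' statement. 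Everything else is a direct application of Weyl's inequality together with the already-established Corollary~\ref{cor:froBound} and Lemma~\ref{lem:singBoundXY}.
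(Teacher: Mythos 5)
Your proposal is correct and follows essentially the same route as the paper: Weyl's inequality applied to $\mathbf{A}\mathbf{A}^T$ versus $\mathbf{X}\mathbf{Y}^T(\mathbf{X}\mathbf{Y}^T)^T$ together with Corollary~\ref{cor:froBound} for $\sigma_{d+1}(\mathbf{A})$, Lemma~\ref{lem:singBoundXY} for the unperturbed bounds, and the row-sum argument for $\sigma_1(\mathbf{A})\leq n$. The one subtlety you flag (absorbing the $o(n^2)$ perturbation to keep the clean constant $\alpha\gamma n$) is handled in the paper in the same spirit, except it extracts the needed slack $\epsilon$ from the strict inequality $\gamma<\rho_i$ rather than from the strictness of $\alpha$ as you do; both choices are legitimate since the constants in Section~\ref{sec:notation} are defined with strict inequalities.
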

We note that Corollary~\ref{cor:singBoundA} immediately suggests a consistent estimator of the rank of $\mathbf{XY}^T$ given by $\hat{d}=\max\{d':\sigma_{d'}(\mathbf{A})>3^{1/4}n^{3/4}\log^{1/4}n\}$. Presently we do not investigate the use of this estimator and assume that the $d=\mathrm{rank}(\mathbf{P})$ is known.

The  following is the version of the Davis-Kahan Theorem \citep{davis70} as stated in \cite{rohe10:_spect_stoch_block_model}.

\begin{theorem}[Davis and Kahan]\label{thm:DK}
Let $\mathbf{H},\mathbf{H}' \in \Re^{n \times n}$ be symmetric,
suppose ${\mathcal S} \subset \Re$ is an interval, and suppose for
some positive integer $d$ that $\mathbf{W},\mathbf{W}'\in \Re^{n\times d}$ are such that the columns of  $\mathbf{W}$ form an orthonormal basis  for the sum of the eigenspaces of $\mathbf{H}$ associated with the eigenvalues of $\mathbf{H}$ in $\mathcal{S}$ and that the columns of  $\mathbf{W}'$ form an orthonormal basis  for the sum of the eigenspaces of $\mathbf{H}'$ associated with the eigenvalues of $\mathbf{H}'$ in $\mathcal{S}$. Let $\delta$ be the minimum distance between
any eigenvalue of $\mathbf{H}$ in ${\mathcal S}$ and any
eigenvalue of $\mathbf{H}$ not in ${\mathcal S}$.
Then there exists an orthogonal matrix $\mathbf{R} \in \Re^{d \times d}$ such that
$\| \mathbf{W} \mathbf{R}- \mathbf{W}' \|_F \leq \frac{\sqrt{2}}{\delta}\|\mathbf{H}-\mathbf{H}'\|_F$.
\end{theorem}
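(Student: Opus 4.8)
The plan is to prove this as the Davis--Kahan $\sin\bm{\Theta}$ theorem: first reduce the Frobenius bound on $\mathbf{W}\mathbf{R}-\mathbf{W}'$ to a bound on the principal angles between $\mathrm{col}(\mathbf{W})$ and $\mathrm{col}(\mathbf{W}')$, then estimate those angles via a Sylvester identity. Since $\mathbf{H}$ is symmetric I would take the columns of $\mathbf{W}$, and of a complementary block $\mathbf{W}_\perp\in\Re^{n\times(n-d)}$, to be eigenvectors of $\mathbf{H}$ for the eigenvalues in $\mathcal{S}$ and not in $\mathcal{S}$ respectively, so $[\mathbf{W}\,|\,\mathbf{W}_\perp]$ is orthogonal, and take the columns of $\mathbf{W}'$ to be eigenvectors of $\mathbf{H}'$. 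Let $\cos\theta_1,\dots,\cos\theta_d\in[0,1]$ be the singular values of $\mathbf{W}^T\mathbf{W}'$, with principal angles $\theta_i\in[0,\pi/2]$. Two elementary facts carry the reduction: (i) $\|\mathbf{W}_\perp^T\mathbf{W}'\|_F^2=\mathrm{tr}\big(\mathbf{W}'^T(\mathbf{I}-\mathbf{W}\mathbf{W}^T)\mathbf{W}'\big)=d-\sum_i\cos^2\theta_i=\sum_i\sin^2\theta_i$; and (ii) the orthogonal Procrustes identity $\min_{\mathbf{R}\in O(d)}\|\mathbf{W}\mathbf{R}-\mathbf{W}'\|_F^2=2d-2\sum_i\cos\theta_i$, which follows from $\|\mathbf{W}\mathbf{R}-\mathbf{W}'\|_F^2=2d-2\,\mathrm{tr}(\mathbf{R}^T\mathbf{W}^T\mathbf{W}')$ together with $\max_{\mathbf{R}\in O(d)}\mathrm{tr}(\mathbf{R}^T\mathbf{W}^T\mathbf{W}')=\sum_i\cos\theta_i$. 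Because $1-\cos\theta\leq 1-\cos^2\theta=\sin^2\theta$ on $[0,\pi/2]$, (i) and (ii) combine to $\min_{\mathbf{R}\in O(d)}\|\mathbf{W}\mathbf{R}-\mathbf{W}'\|_F\leq\sqrt{2}\,\|\mathbf{W}_\perp^T\mathbf{W}'\|_F$. So it suffices to show $\|\mathbf{W}_\perp^T\mathbf{W}'\|_F\leq\|\mathbf{H}-\mathbf{H}'\|_F/\delta$ and then take $\mathbf{R}$ to be the Procrustes minimizer.

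For that $\sin\bm{\Theta}$ bound I would set $\bm{\Lambda}_2=\mathbf{W}_\perp^T\mathbf{H}\mathbf{W}_\perp$ (diagonal, carrying the eigenvalues of $\mathbf{H}$ not in $\mathcal{S}$) and $\bm{\Lambda}_1'=\mathbf{W}'^T\mathbf{H}'\mathbf{W}'$ (symmetric, with spectrum in $\mathcal{S}$), so that $\mathbf{W}_\perp^T\mathbf{H}=\bm{\Lambda}_2\mathbf{W}_\perp^T$ and $\mathbf{H}'\mathbf{W}'=\mathbf{W}'\bm{\Lambda}_1'$. Subtracting gives the Sylvester identity
\[
\mathbf{W}_\perp^T(\mathbf{H}-\mathbf{H}')\mathbf{W}'=\bm{\Lambda}_2\mathbf{M}-\mathbf{M}\bm{\Lambda}_1',\qquad\mathbf{M}:=\mathbf{W}_\perp^T\mathbf{W}'.
\]
The linear map $\mathbf{X}\mapsto\bm{\Lambda}_2\mathbf{X}-\mathbf{X}\bm{\Lambda}_1'$ is self-adjoint for the Frobenius inner product, with eigenvalues $\lambda_i(\bm{\Lambda}_2)-\lambda_j(\bm{\Lambda}_1')$ (the eigenmatrices being rank-one products of the respective eigenvectors), so it is invertible whenever these differences are nonzero, the norm of its inverse being $\big(\min_{i,j}|\lambda_i(\bm{\Lambda}_2)-\lambda_j(\bm{\Lambda}_1')|\big)^{-1}$. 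Since $\mathbf{W}_\perp$ and $\mathbf{W}'$ have orthonormal columns, pre- and post-multiplication by them does not increase the Frobenius norm, hence
\[
\|\mathbf{M}\|_F\ \leq\ \frac{\|\mathbf{W}_\perp^T(\mathbf{H}-\mathbf{H}')\mathbf{W}'\|_F}{\min_{i,j}|\lambda_i(\bm{\Lambda}_2)-\lambda_j(\bm{\Lambda}_1')|}\ \leq\ \frac{\|\mathbf{H}-\mathbf{H}'\|_F}{\min_{i,j}|\lambda_i(\bm{\Lambda}_2)-\lambda_j(\bm{\Lambda}_1')|}.
\]

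To finish I would lower-bound the separation by $\delta$: every eigenvalue of $\bm{\Lambda}_1'$ lies in the interval $\mathcal{S}$, while every eigenvalue of $\bm{\Lambda}_2$ is an eigenvalue of $\mathbf{H}$ lying outside $\mathcal{S}$, and using that $\mathcal{S}$ is an interval together with the definition of $\delta$ as the least distance between an eigenvalue of $\mathbf{H}$ in $\mathcal{S}$ and an eigenvalue of $\mathbf{H}$ not in $\mathcal{S}$, each difference $|\lambda_i(\bm{\Lambda}_2)-\lambda_j(\bm{\Lambda}_1')|$ is at least $\delta$. (In the way this theorem is invoked here, $\mathcal{S}$ is a half-line whose endpoint is the smallest relevant eigenvalue of $\mathbf{H}$, so this bound is immediate.) Chaining the three displays, with $\mathbf{R}$ the Procrustes minimizer, yields $\|\mathbf{W}\mathbf{R}-\mathbf{W}'\|_F\leq\sqrt{2}\,\|\mathbf{M}\|_F\leq(\sqrt{2}/\delta)\|\mathbf{H}-\mathbf{H}'\|_F$, which is the assertion.

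The trace and Procrustes identities and the Sylvester norm estimate are routine. The step I expect to require the most care is the last one: the separation that emerges from the Sylvester identity a priori involves the spectra of both $\mathbf{H}$ and $\mathbf{H}'$, whereas $\delta$ is defined from $\mathbf{H}$ alone, so reconciling the two is exactly where the interval hypothesis on $\mathcal{S}$ — and the placement of $\mathcal{S}$ relative to the spectrum of $\mathbf{H}$ — does the work.
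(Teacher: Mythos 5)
Your reduction steps (i)--(ii) -- the Procrustes identity together with $1-\cos\theta\leq\sin^2\theta$ -- are exactly the paper's own argument, which phrases them as the Grassmannian inequality $d_h\leq\sqrt{2}\,d_g$ together with $d_h(\mathcal{W},\mathcal{W}')=\min_{\mathbf{R}}\|\mathbf{W}\mathbf{R}-\mathbf{W}'\|_F$. The difference is that the paper stops there and simply cites the original Davis--Kahan inequality $\delta\, d_g(\mathcal{W},\mathcal{W}')\leq\|\mathbf{H}-\mathbf{H}'\|_F$, whereas you go on to prove the $\sin\Theta$ bound from scratch via the Sylvester identity. That extra portion is the standard argument, and its algebra is sound: the identity $\mathbf{W}_\perp^T(\mathbf{H}-\mathbf{H}')\mathbf{W}'=\bm{\Lambda}_2\mathbf{M}-\mathbf{M}\bm{\Lambda}_1'$, the self-adjointness and inverse norm of $\mathbf{X}\mapsto\bm{\Lambda}_2\mathbf{X}-\mathbf{X}\bm{\Lambda}_1'$, and the Frobenius contraction by $\mathbf{W}_\perp$ and $\mathbf{W}'$ are all correct, so your write-up is genuinely more self-contained than the paper's.

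The one step that fails is the last one: $\min_{i,j}|\lambda_i(\bm{\Lambda}_2)-\lambda_j(\bm{\Lambda}_1')|\geq\delta$ does not follow from the hypotheses. The stated $\delta$ only keeps the eigenvalues of $\mathbf{H}$ outside $\mathcal{S}$ away from the eigenvalues of $\mathbf{H}$ inside $\mathcal{S}$, not away from the rest of the interval, while the eigenvalues of $\mathbf{H}'$ in $\mathcal{S}$ may sit anywhere in $\mathcal{S}$. Take $\mathbf{H}=\mathrm{diag}(1,-\epsilon)$, $\mathcal{S}=[0,2]$ (so $\delta=1+\epsilon$) and $\mathbf{H}'=\mathrm{diag}(2+\epsilon',\,0)$ with $\epsilon'$ tiny: then $\bm{\Lambda}_2=(-\epsilon)$, $\bm{\Lambda}_1'=(0)$, and the cross-separation is $\epsilon$, not $1+\epsilon$. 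Worse, this configuration shows no argument can close the step with this literal $\delta$: the one-dimensional subspaces swap, so the left side equals $\sqrt{2}$, while $\|\mathbf{H}-\mathbf{H}'\|_F\to\sqrt{1+\epsilon^2}<1+\epsilon=\delta$, so the claimed inequality itself is violated for small $\epsilon,\epsilon'$. What your Sylvester computation does prove is the theorem with $\delta$ replaced by $\mathrm{dist}\bigl(\mathcal{S},\,\sigma(\mathbf{H})\setminus\mathcal{S}\bigr)$, the distance from the interval to the outside spectrum; that is how \cite{rohe10:_spect_stoch_block_model} actually define $\delta$, and it is the form the original Davis--Kahan theorem supports, so the defect is in the transcription of $\delta$ rather than in your machinery. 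Your parenthetical fallback (a half-line anchored at the smallest eigenvalue of $\mathbf{H}$ in $\mathcal{S}$) does make the two gaps coincide, but it is neither the theorem's hypothesis nor the actual invocation in Lemma~\ref{lem:1}, where $\mathcal{S}=(\tfrac{1}{2}\alpha^2\gamma^2n^2,\infty)$ and the interval-to-outside-spectrum gap is $\tfrac{1}{2}\alpha^2\gamma^2n^2$ (costing only a factor of $2$ in downstream constants). The clean fix is to state and prove the interval-gap version; the rest of your argument then stands as written.
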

For completeness, we provide a brief discussion of this important result in Appendix~\ref{sec:DK}.
Applying Theorem~\ref{thm:DK} and Lemma~\ref{lem:singBoundXY} to $\mathbf{AA}^T$ and
$\mathbf{XY}^T(\mathbf{XY}^T)^T$, we have the following result. 
\begin{lemma}
  \label{lem:1}
  It almost always holds that there exists an orthogonal matrix $\mathbf{R}\in\mathbb{R}^{2d\times2d}$ such that $\|\mathbf{WR}-\tilde{\mathbf{W}}\|\leq \sqrt{2}\frac{\sqrt{6}}{\alpha^2\gamma^2}\sqrt{\frac{\log n}{n}}$.
\end{lemma}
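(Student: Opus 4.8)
The plan is to apply the Davis–Kahan Theorem (Theorem~\ref{thm:DK}) with $\mathbf{H} = \mathbf{X}\mathbf{Y}^T(\mathbf{X}\mathbf{Y}^T)^T = \mathbf{A}\mathbf{A}^T$-analogue and $\mathbf{H}' = \mathbf{A}\mathbf{A}^T$, both of which are symmetric $n\times n$ matrices, together with the companion pair $(\mathbf{X}\mathbf{Y}^T)^T\mathbf{X}\mathbf{Y}^T$ and $\mathbf{A}^T\mathbf{A}$. First I would fix the spectral interval. By Lemma~\ref{lem:singBoundXY}, almost always $\sigma_d(\mathbf{X}\mathbf{Y}^T)\geq \alpha\gamma n$ and $\sigma_{d+1}(\mathbf{X}\mathbf{Y}^T)=0$, so the nonzero eigenvalues of $\mathbf{X}\mathbf{Y}^T(\mathbf{X}\mathbf{Y}^T)^T$ are all at least $(\alpha\gamma n)^2 = \alpha^2\gamma^2 n^2$ while the remaining eigenvalues are $0$. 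Choosing $\mathcal{S} = [\tfrac{1}{2}\alpha^2\gamma^2 n^2,\infty)$ (or any interval separating these two groups), the columns of $\mathbf{U}$ form an orthonormal basis for the sum of the eigenspaces of $\mathbf{X}\mathbf{Y}^T(\mathbf{X}\mathbf{Y}^T)^T$ associated with eigenvalues in $\mathcal{S}$, and the eigengap is $\delta \geq \alpha^2\gamma^2 n^2$ (again up to a constant depending on the exact interval). For the perturbed matrix $\mathbf{A}\mathbf{A}^T$, Corollary~\ref{cor:singBoundA} gives, almost always, $\sigma_d(\mathbf{A})\geq\alpha\gamma n$ and $\sigma_{d+1}(\mathbf{A})\leq 3^{1/4}n^{3/4}\log^{1/4}n$, so for $n$ large the top $d$ eigenvalues of $\mathbf{A}\mathbf{A}^T$ lie in $\mathcal{S}$ and the rest lie outside; hence the columns of $\tilde{\mathbf{U}}$ form an orthonormal basis for the eigenspaces of $\mathbf{A}\mathbf{A}^T$ associated with eigenvalues in $\mathcal{S}$.

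Next I would invoke Corollary~\ref{cor:froBound}: almost always $\|\mathbf{A}\mathbf{A}^T - \mathbf{X}\mathbf{Y}^T(\mathbf{X}\mathbf{Y}^T)^T\|_F \leq \sqrt{3}\,n^{3/2}\sqrt{\log n}$, and the same bound holds for $\mathbf{A}^T\mathbf{A} - (\mathbf{X}\mathbf{Y}^T)^T\mathbf{X}\mathbf{Y}^T$. Plugging into Theorem~\ref{thm:DK} yields an orthogonal $\mathbf{R}_U\in\Re^{d\times d}$ with
\begin{equation}
\|\mathbf{U}\mathbf{R}_U - \tilde{\mathbf{U}}\|_F \leq \frac{\sqrt{2}}{\delta}\|\mathbf{A}\mathbf{A}^T - \mathbf{X}\mathbf{Y}^T(\mathbf{X}\mathbf{Y}^T)^T\|_F \leq \frac{\sqrt{2}}{\alpha^2\gamma^2 n^2}\cdot\sqrt{3}\,n^{3/2}\sqrt{\log n} = \frac{\sqrt{6}}{\alpha^2\gamma^2}\sqrt{\frac{\log n}{n}},
\end{equation}
and an analogous orthogonal $\mathbf{R}_V\in\Re^{d\times d}$ for $\mathbf{V}$ and $\tilde{\mathbf{V}}$. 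Finally, taking $\mathbf{R} = \mathrm{diag}(\mathbf{R}_U,\mathbf{R}_V)\in\Re^{2d\times 2d}$, which is orthogonal as a block-diagonal matrix of orthogonal blocks, I would bound $\|\mathbf{W}\mathbf{R} - \tilde{\mathbf{W}}\|_F^2 = \|\mathbf{U}\mathbf{R}_U - \tilde{\mathbf{U}}\|_F^2 + \|\mathbf{V}\mathbf{R}_V - \tilde{\mathbf{V}}\|_F^2 \leq 2\cdot\frac{6}{\alpha^4\gamma^4}\cdot\frac{\log n}{n}$, which after taking square roots gives the stated bound $\sqrt{2}\,\frac{\sqrt{6}}{\alpha^2\gamma^2}\sqrt{\log n/n}$ (here the $\sqrt{2}$ out front absorbs the concatenation of the two blocks; one should double-check whether the intended bound treats $\|\cdot\|$ as Frobenius norm on $\mathbf{W}\in\Re^{n\times 2d}$, in which case the constant works out exactly as written).

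The main obstacle is the bookkeeping around the spectral interval $\mathcal{S}$ and confirming the eigengap $\delta$: one must verify that, almost always and for all large $n$, the top $d$ eigenvalues of $\mathbf{A}\mathbf{A}^T$ genuinely fall inside $\mathcal{S}$ while the $(d+1)$-st and below fall strictly outside, which requires the two estimates $\alpha^2\gamma^2 n^2$ and $\sqrt{3}\,n^{3/2}\sqrt{\log n}$ (for the $(d+1)$-st eigenvalue of $\mathbf{A}\mathbf{A}^T$, which is at most $\sigma_{d+1}(\mathbf{A})^2 \leq \sqrt{3}\,n^{3/2}\sqrt{\log n}$ by Corollary~\ref{cor:singBoundA}) to be well separated — true for $n$ large since $n^2$ dominates $n^{3/2}\sqrt{\log n}$. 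Since all the ingredients (Corollary~\ref{cor:froBound}, Lemma~\ref{lem:singBoundXY}, Corollary~\ref{cor:singBoundA}) hold almost always, their conjunction holds almost always, and the conclusion follows. A minor subtlety worth a sentence in the writeup is that Davis–Kahan as stated produces a possibly different rotation for the $\mathbf{U}$ block than for the $\mathbf{V}$ block, so the claimed single orthogonal $\mathbf{R}$ must be the block-diagonal combination rather than a single $2d\times 2d$ rotation acting jointly — this is fine for the downstream clustering argument since the concatenated rows of $\mathbf{W}$ are still moved rigidly.
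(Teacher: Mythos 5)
Your proposal is correct and follows essentially the same route as the paper's own proof: apply Davis--Kahan to the pairs $(\mathbf{XY}^T(\mathbf{XY}^T)^T,\mathbf{AA}^T)$ and $((\mathbf{XY}^T)^T\mathbf{XY}^T,\mathbf{A}^T\mathbf{A})$ with the interval $\mathcal{S}=(\tfrac{1}{2}\alpha^2\gamma^2n^2,\infty)$ and eigengap $\delta\geq\alpha^2\gamma^2n^2$, using Corollary~\ref{cor:froBound}, Lemma~\ref{lem:singBoundXY}, and Corollary~\ref{cor:singBoundA} to control the perturbation and locate the eigenvalues, then take $\mathbf{R}$ as the direct sum of the two block rotations. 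Your bookkeeping of the $\sqrt{2}$ factor from concatenating the two blocks matches the paper exactly.
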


Recall that $\mathbf{XY}^T=\mathbf{U}\bm{\Sigma}\mathbf{V}^T$. We now provide bounds for the gaps between the at most $K$ distinct rows of $\mathbf{U}$ and $\mathbf{V}$.

\begin{lemma} It almost always holds that, for all $u,v$ such that $X_u\neq X_v$, $\|U_u-U_v\| \geq
\beta \sqrt{\alpha \gamma}n^{-1/2}$. Similarly, for all $Y_u\neq Y_v$, $\|V_u-V_v\|\geq \beta\sqrt{\alpha \gamma}n^{-1/2}$. As a result, $\|W_u-W_v\|\geq \beta\sqrt{\alpha\gamma}n^{-1/2}$ for all $u,v$ such that $\tau(u)\neq \tau(v)$.
 \label{lem:eigVecSep}
\end{lemma}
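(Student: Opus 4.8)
The plan is to exhibit $\mathbf{U}$ as an invertible linear reparametrization of $\mathbf{X}$ and then control how much that reparametrization can shrink distances. Working almost always, all blocks satisfy $n_i\ge\gamma n$, so $\mathrm{rank}(\mathbf{X})=\mathrm{rank}(\mathbf{Y})=d$. Since $\mathbf{Y}^T$ has rank $d$, the column space of $\mathbf{X}\mathbf{Y}^T$ equals that of $\mathbf{X}$, and since the columns of $\mathbf{U}$ span the column space of $\mathbf{X}\mathbf{Y}^T$, there is a unique invertible $\mathbf{M}\in\Re^{d\times d}$ with $\mathbf{U}=\mathbf{X}\mathbf{M}$; equivalently $U_u=\mathbf{M}^TX_u$ for all $u$, hence $U_u-U_v=\mathbf{M}^T(X_u-X_v)$ and $\|U_u-U_v\|\ge\sigma_{\min}(\mathbf{M})\,\|X_u-X_v\|$. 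For $X_u\ne X_v$ we have $X_u=\nu_{\tau(u)}\ne\nu_{\tau(v)}=X_v$, so $\|X_u-X_v\|=\|\nu_{\tau(u)}-\nu_{\tau(v)}\|>\beta$ by the choice of $\beta$. Thus the whole statement reduces to the bound $\sigma_{\min}(\mathbf{M})\ge\sqrt{\alpha\gamma}\,n^{-1/2}$.

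To get that bound, note first that $\mathbf{X}=\mathbf{U}\mathbf{M}^{-1}$ together with $\mathbf{U}^T\mathbf{U}=\mathbf{I}_d$ gives $\mathbf{X}^T\mathbf{X}=(\mathbf{M}\mathbf{M}^T)^{-1}$, whence $\sigma_{\min}(\mathbf{M})=1/\sigma_1(\mathbf{X})$. It remains to show $\sigma_1(\mathbf{X})\le\sqrt{n/(\alpha\gamma)}$ almost always. I would do this by routing through $\mathbf{X}\mathbf{Y}^T$: let $v$ be a top right singular vector of $\mathbf{X}$, and let $w$ be the minimum-norm solution of $\mathbf{Y}^Tw=v$ (which exists since $\mathbf{Y}^T$ is surjective), so $\|w\|\le 1/\sigma_d(\mathbf{Y})$ and $\|\mathbf{X}\mathbf{Y}^Tw\|=\|\mathbf{X}v\|=\sigma_1(\mathbf{X})$; this yields $\sigma_1(\mathbf{X}\mathbf{Y}^T)\ge\sigma_1(\mathbf{X})\,\sigma_d(\mathbf{Y})$. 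By Lemma~\ref{lem:singBoundXY}, $\sigma_1(\mathbf{X}\mathbf{Y}^T)\le n$ always, while almost always, writing $\mathbf{D}=\mathrm{diag}(n_1,\dots,n_K)$, $\sigma_d(\mathbf{Y})^2=\lambda_{\min}(\bm{\mu}^T\mathbf{D}\bm{\mu})\ge(\min_i n_i)\,\lambda_{\min}(\bm{\mu}^T\bm{\mu})\ge\alpha\gamma n$. Combining, $\sigma_1(\mathbf{X})\le\sqrt{n/(\alpha\gamma)}$, so $\sigma_{\min}(\mathbf{M})\ge\sqrt{\alpha\gamma}\,n^{-1/2}$ and $\|U_u-U_v\|\ge\beta\sqrt{\alpha\gamma}\,n^{-1/2}$. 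The bound for $\|V_u-V_v\|$ follows by the identical argument with $(\mathbf{X},\bm{\nu})$ and $(\mathbf{Y},\bm{\mu})$ interchanged. Finally, if $\tau(u)\ne\tau(v)$ then by distinguishability $\nu_{\tau(u)}\ne\nu_{\tau(v)}$ or $\mu_{\tau(u)}\ne\mu_{\tau(v)}$, i.e.\ $X_u\ne X_v$ or $Y_u\ne Y_v$; since $\|W_u-W_v\|^2=\|U_u-U_v\|^2+\|V_u-V_v\|^2$, the $\mathbf{W}$ bound follows from whichever of the two coordinate blocks is active.

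The main obstacle is the upper bound on $\sigma_1(\mathbf{X})$ (equivalently, the lower bound on $\sigma_{\min}(\mathbf{M})$): the naive estimate $\sigma_1(\mathbf{X})^2\le(\max_i n_i)\,\lambda_{\max}(\bm{\nu}^T\bm{\nu})$ is useless because $\lambda_{\max}(\bm{\nu}^T\bm{\nu})$ is not controlled by the available constants, so one genuinely needs to bound $\sigma_1(\mathbf{X})$ by $\sigma_1(\mathbf{X}\mathbf{Y}^T)/\sigma_d(\mathbf{Y})$ using surjectivity of $\mathbf{Y}^T$ and the lower bound $\sigma_d(\mathbf{Y})\ge\sqrt{\alpha\gamma n}$. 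Everything else is routine: identifying $\mathbf{U}=\mathbf{X}\mathbf{M}$, converting orthonormality of $\mathbf{U}$ into $\mathbf{X}^T\mathbf{X}=(\mathbf{M}\mathbf{M}^T)^{-1}$, and invoking the almost-sure bound $n_i\ge\gamma n$ from the strong law of large numbers.
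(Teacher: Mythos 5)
Your proof is correct, but it is organized differently from the paper's. The paper never constructs an explicit change of basis between $\mathbf{X}$ and $\mathbf{U}$: writing $\mathbf{Y}^T\mathbf{Y}=\mathbf{E}\mathbf{D}^2\mathbf{E}^T$, it observes that $\mathbf{G}'=\mathbf{X}\mathbf{E}\mathbf{D}$ and $\mathbf{U}'=\mathbf{U}\bm{\Sigma}$ have identical Gram matrices ($\mathbf{G}'\mathbf{G}'^T=\mathbf{X}\mathbf{Y}^T\mathbf{Y}\mathbf{X}^T=\mathbf{U}'\mathbf{U}'^T$), so their pairwise row distances coincide, and then chains $\beta\le\|X_u-X_v\|\le(\alpha\gamma n)^{-1/2}\|G_u'-G_v'\|=(\alpha\gamma n)^{-1/2}\|U_u'-U_v'\|\le n(\alpha\gamma n)^{-1/2}\|U_u-U_v\|$, the distortion being controlled by the diagonal entries of $\mathbf{D}$ (at least $\sqrt{\alpha\gamma n}$) and of $\bm{\Sigma}$ (at most $n$). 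You instead make the relation explicit as $\mathbf{U}=\mathbf{X}\mathbf{M}$ with $\mathbf{M}$ invertible (valid almost always, since $\mathrm{rank}(\mathbf{X})=\mathrm{rank}(\mathbf{Y})=d$ once all $n_i\ge\gamma n$), convert orthonormality of $\mathbf{U}$ into $\sigma_{\min}(\mathbf{M})=1/\sigma_1(\mathbf{X})$, and bound $\sigma_1(\mathbf{X})\le\sigma_1(\mathbf{X}\mathbf{Y}^T)/\sigma_d(\mathbf{Y})$ via the pseudoinverse of $\mathbf{Y}^T$; your two inputs, $\sigma_1(\mathbf{X}\mathbf{Y}^T)\le n$ and $\sigma_d(\mathbf{Y})\ge\sqrt{\alpha\gamma n}$, are exactly the quantities the paper bounds (the diagonals of $\bm{\Sigma}$ and $\mathbf{D}$), so the constants agree. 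What each buys: your version isolates the geometric content (the embedding is an invertible linear image of the latent positions whose conditioning is governed by $\sigma_1(\mathbf{X})$) and makes transparent exactly why the factor $\sqrt{\alpha\gamma/n}$ appears, at the cost of the extra full-rank/pseudoinverse bookkeeping; the paper's Gram-matrix identity avoids any inversion and keeps the argument entirely inside the same singular-value bounds it has already recorded in Lemma~\ref{lem:singBoundXY}. One shared caveat: like the paper, you read the definition of $\beta$ as giving $\|X_u-X_v\|>\beta$ whenever $X_u\neq X_v$ (and symmetrically for $Y$), whereas the stated condition is an either/or per pair; your final step for $\mathbf{W}$, using whichever coordinate block is $\beta$-separated, is the reading that the theorem actually needs, and it matches the paper's usage.
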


We now have the necessary ingredients to show our main result. 

\begin{proof}[Proof of Theorem~\ref{thm:main}]
Let $\hat{\bm{\psi}}$ and $\hat{\tau}$ satisfy the clustering criterion for $\tilde{\mathbf{W}}$ (where $\tilde{\mathbf{W}}=[\tilde{\mathbf{U}}|\tilde{\mathbf{V}}]$ takes the role of $\mathbf{Z}$ in Section~\ref{sec:clustCrit}). Let $\mathbf{C}\in\mathbb{R}^{n\times 2d}$ have row $u$ given by $C_u=\hat{\psi}_{\tau(u)}$. Then Equation~\ref{eq:minC} gives that $\|\mathbf{C}-\tilde{\mathbf{W}}\|_F \leq \|\mathbf{WR}-\tilde{\mathbf{W}}\|_F$ as $\mathbf{W}$ has at most $K$ distinct rows. Thus, Lemma~\ref{lem:1} gives that 
\begin{equation}
\begin{split}
\|\mathbf{C}-\mathbf{WR}\|_F& \leq \|\mathbf{C}-\tilde{\mathbf{W}}\|_F + \|\tilde{\mathbf{W}}-\mathbf{WR}\|_F\\
&\leq 2^{3/2}\frac{\sqrt{6}}{\alpha^2\gamma^2}\sqrt{\frac{\log n}{n}}.
\end{split}
\label{eq:boundC-WR}
\end{equation}
Let $\mathcal{B}_1,\mathcal{B}_2,\dotsc,\mathcal{B}_K$ be balls of radius $r=\frac{\beta}{3}\sqrt{\alpha\gamma} n^{-1/2}$ each centered around the $K$ distinct rows of $\mathbf{W}$. By Lemma~\ref{lem:eigVecSep}, these balls are almost always disjoint.

Now note that almost always the number of rows $u$ such that $\|C_u-W_u \mathbf{R}\|> r$ is at most  $\frac{2^3 3^2 6}{\alpha^5\beta^2\gamma^5}\log n$. If this were not so then infinitely often we would have 
\begin{equation}
\begin{split}
\|\mathbf{C}-\mathbf{WR}\|_F &> \frac{2^3 3^2 6}{\alpha^5\beta^2\gamma^5}\log n \frac{\beta}{3}\sqrt{\alpha\gamma} n^{-1/2}\\
&=2^{3/2}\frac{\sqrt{6}}{\alpha^2\gamma^2}\sqrt{\frac{\log n}{n}},
\end{split}
\label{eq:badBoys1}
\end{equation}
in contradiction to Eqn.~\ref{eq:boundC-WR}. Since  $n_i>\gamma n>\frac{2^3 3^2 6}{\alpha^5\beta^2\gamma^5}\log n$ almost always, each ball $\mathcal{B}_i$ can contain exactly one of the $K$ distinct rows of $\mathbf{C}$. This gives the number of misclassifications as $\frac{2^3 3^2 6}{\alpha^5\beta^2\gamma^5}\log n$ as desired. 
\end{proof}

This gives that a clustering of the concatenation of the matrices $\tilde{\mathbf{U}}$ and $\tilde{\mathbf{V}}$ from the singular value decomposition gives an accurate block assignment. One may also cluster the scaled singular vectors given by $\tilde{\mathbf{X}}$ and $\tilde{\mathbf{Y}}$ without a change in the order of the number of misclassifications.

\section{Extensions} \label{sec:extension}
\begin{corollary} \label{cor:consXY}
Under the conditions of Theorem~\ref{thm:main}, let $\hat{\tau}:V\to[K]$ be a clustering of $\tilde{\mathbf{Z}}=[\tilde{\mathbf{X}}|\tilde{\mathbf{Y}}]$. Then it almost always holds that
\begin{equation}
\min_{\pi\in \mathcal{S}_K} |\{u\in V: \pi(\hat{\tau}(u))\neq \tau(u)\}| \leq \frac{2^3 3^2 6}{\alpha^6\beta^2\gamma^6}\log n.
\label{eq:consistency2}
\end{equation}
\end{corollary}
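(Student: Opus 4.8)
The plan is to transcribe the proof of Theorem~\ref{thm:main}, replacing the unscaled embedding $\tilde{\mathbf{W}}$ by the scaled embedding $\tilde{\mathbf{Z}}$; the one genuinely new decision is which matrix to measure $\tilde{\mathbf{Z}}$ against. First I would record the bookkeeping identities relating the two embeddings. Let $\mathbf{S} = \mathrm{diag}(\bm{\Sigma}^{1/2},\bm{\Sigma}^{1/2})\in\Re^{2d\times 2d}$ and $\tilde{\mathbf{S}} = \mathrm{diag}(\tilde{\bm{\Sigma}}^{1/2},\tilde{\bm{\Sigma}}^{1/2})$, both block diagonal; writing $\mathbf{Z} = [\mathbf{U}\bm{\Sigma}^{1/2}\,|\,\mathbf{V}\bm{\Sigma}^{1/2}]$ one has $\mathbf{Z} = \mathbf{W}\mathbf{S}$ and $\tilde{\mathbf{Z}} = \tilde{\mathbf{W}}\tilde{\mathbf{S}}$. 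By Lemma~\ref{lem:singBoundXY} the matrix $\mathbf{S}$ is invertible almost always, and by Corollary~\ref{cor:singBoundA}, almost always $\alpha\gamma n \le \sigma_d(\mathbf{A})\le\sigma_1(\mathbf{A})\le n$, so that $\sigma_{\min}(\tilde{\mathbf{S}}) = \sqrt{\sigma_d(\mathbf{A})}\ge\sqrt{\alpha\gamma n}$ while $\|\tilde{\mathbf{S}}\|_2 = \sqrt{\sigma_1(\mathbf{A})}\le\sqrt n$, where $\|\cdot\|_2$ denotes the spectral norm.

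The key step is to compare $\tilde{\mathbf{Z}}$ not to an orthogonal rotation of $\mathbf{Z}$ but to $\mathbf{B} := \mathbf{W}\mathbf{R}\,\tilde{\mathbf{S}} = \mathbf{Z}\,(\mathbf{S}^{-1}\mathbf{R}\,\tilde{\mathbf{S}})$, where $\mathbf{R}$ is the orthogonal matrix from Lemma~\ref{lem:1}. I would verify three properties of $\mathbf{B}$, each holding almost always. (i) $\mathbf{B}$ has at most $K$ distinct rows, its $u$-th row $W_u^T\mathbf{R}\tilde{\mathbf{S}}$ being a function of $W_u$, hence of $\tau(u)$. (ii) Using $\|\mathbf{M}\mathbf{N}\|_F\le\|\mathbf{M}\|_F\|\mathbf{N}\|_2$, Lemma~\ref{lem:1}, and $\|\tilde{\mathbf{S}}\|_2\le\sqrt n$,
\[
\|\tilde{\mathbf{Z}}-\mathbf{B}\|_F = \|(\tilde{\mathbf{W}}-\mathbf{W}\mathbf{R})\tilde{\mathbf{S}}\|_F \le \|\tilde{\mathbf{W}}-\mathbf{W}\mathbf{R}\|_F\,\|\tilde{\mathbf{S}}\|_2 \le \sqrt2\,\frac{\sqrt6}{\alpha^2\gamma^2}\sqrt{\frac{\log n}{n}}\cdot\sqrt n = \sqrt2\,\frac{\sqrt6}{\alpha^2\gamma^2}\sqrt{\log n}.
\]
(iii) For $u,v$ with $\tau(u)\neq\tau(v)$, since $\mathbf{R}$ is orthogonal and $\sigma_{\min}(\tilde{\mathbf{S}})=\sqrt{\sigma_d(\mathbf{A})}$, Lemma~\ref{lem:eigVecSep} yields
\[
\|B_u-B_v\| = \|\tilde{\mathbf{S}}\mathbf{R}^T(W_u-W_v)\| \ge \sqrt{\sigma_d(\mathbf{A})}\,\|W_u-W_v\| \ge \sqrt{\alpha\gamma n}\cdot\beta\sqrt{\alpha\gamma}\,n^{-1/2} = \alpha\beta\gamma.
\]
Thus $\mathbf{B}$ assumes the role played by $\mathbf{W}\mathbf{R}$ in Theorem~\ref{thm:main}, but now with an $n$-free gap between its distinct rows and an approximation error larger by a factor $\sqrt n$.

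The rest is the counting argument of Theorem~\ref{thm:main}, essentially word for word. Let $(\hat{\bm{\psi}},\hat{\tau})$ satisfy Eqn.~\ref{eq:minC} for $\tilde{\mathbf{Z}}$ and let $\mathbf{C}$ have $u$-th row $\hat{\psi}_{\hat{\tau}(u)}$; since $\mathbf{B}$ has at most $K$ distinct rows, Eqn.~\ref{eq:minC} gives $\|\mathbf{C}-\tilde{\mathbf{Z}}\|_F\le\|\mathbf{B}-\tilde{\mathbf{Z}}\|_F$, hence by (ii) $\|\mathbf{C}-\mathbf{B}\|_F\le 2\sqrt2\,\frac{\sqrt6}{\alpha^2\gamma^2}\sqrt{\log n}$. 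Place balls of radius $r = \frac{\alpha\beta\gamma}{3}$ around the distinct rows of $\mathbf{B}$; by (iii) they are pairwise disjoint almost always. The number of indices $u$ with $\|C_u-B_u\|>r$ is at most $\|\mathbf{C}-\mathbf{B}\|_F^2/r^2 \le \frac{2^3\cdot 6}{\alpha^4\gamma^4}\log n \cdot \frac{3^2}{\alpha^2\beta^2\gamma^2} = \frac{2^3 3^2 6}{\alpha^6\beta^2\gamma^6}\log n$; since $n_i>\gamma n$ exceeds this quantity almost always, each ball contains exactly one distinct row of $\mathbf{C}$, and the pigeonhole identification of these rows with the blocks produces a permutation $\pi\in\mathcal{S}_K$ with $\pi(\hat{\tau}(u)) = \tau(u)$ for every $u$ outside the exceptional set — which is Eqn.~\ref{eq:consistency2}.

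The only point that is not a mechanical rerun of Theorem~\ref{thm:main} is the choice of $\mathbf{B}$ in (ii)--(iii). Comparing $\tilde{\mathbf{Z}}$ against an orthogonal rotation of $\mathbf{Z}$ would instead force one to control a quantity like $\|\tilde{\bm{\Sigma}}^{1/2}-\mathbf{R}'\bm{\Sigma}^{1/2}{\mathbf{R}'}^{T}\|$, which needs a Weyl-type bound on the singular values of $\mathbf{A}$ via Corollary~\ref{cor:froBound} followed by a matrix square-root perturbation estimate, and still yields a worse constant. Using the non-orthogonal but well-conditioned map $\mathbf{S}^{-1}\mathbf{R}\tilde{\mathbf{S}}$ arranges that $\|\tilde{\mathbf{S}}\|_2^2\le n$ exactly cancels the $n^{-1/2}$ in Lemma~\ref{lem:1}, while $\sigma_{\min}(\tilde{\mathbf{S}})^2 = \sigma_d(\mathbf{A})\ge\alpha\gamma n$ cancels the $n^{-1/2}$ in Lemma~\ref{lem:eigVecSep}; the lone extra factor $1/(\alpha\gamma)$ separating Eqn.~\ref{eq:consistency2} from Eqn.~\ref{eq:consistency1} is precisely the ratio of those two cancellations. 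I expect this choice, rather than any calculation, to be the crux.
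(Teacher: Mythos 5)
Your proof is correct, and it is a faithful fleshing-out of exactly the route the paper indicates (the paper only remarks that the scaling factors $\tilde{\bm{\Sigma}}^{1/2}$ lie between $\sqrt{\alpha\gamma n}$ and $\sqrt{n}$, so the error grows by $\sqrt{n}$ while the row gaps grow by $\sqrt{\alpha\gamma n}$, costing one extra factor of $(\alpha\gamma)^{-1}$). Your device of comparing $\tilde{\mathbf{Z}}$ to $\mathbf{W}\mathbf{R}\tilde{\mathbf{S}}$ is precisely how that remark is meant to be implemented, and your constants reproduce Eqn.~\ref{eq:consistency2}.
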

The proof relies on the fact that the square root of the singular values are all of the same order and differ by a multiplicative factor of at most $\sqrt{\alpha\gamma}$.

We now present consistent estimators of the parameters $\mathbf{P}$ and $\rho$ for the stochastic blockmodel. Consider the following estimates 
\begin{equation}
\hat{n}_k = |\{u:\hat{\tau}(u)=k\}|,\quad \hat{\rho}_k = \dfrac{\hat{n}_k}{n} 
\label{eq:nHatRhoHat}
\end{equation}
and 
\begin{equation}
\hat{\mathbf{P}}_{ij} = \begin{dcases}
\frac{1}{\hat{n}_i\hat{n}_j} \sum_{(u,v)\in\hat{\tau}^{-1}(i)\times\hat{\tau}^{-1}(j)} \mathbf{A}_{uv}, &\text{if } i\neq j \text{ or,}\\
\frac{1}{\hat{n}_i^2-\hat{n}_i} \sum_{(u,v)\in\hat{\tau}^{-1}(i)\times\hat{\tau}^{-1}(j)} \mathbf{A}_{uv}, &\text{if } i=j.
\end{dcases}
\label{eq:pHat}
\end{equation}

This gives the following corollary.
\begin{corollary} 
Under the conditions of Theorem~\ref{thm:main},
\begin{align} &\min_{\pi\in\mathcal{S}_K}|\rho_{i}-\hat{\rho}_{\pi(i)}|\overset{a.s.}{\longrightarrow} 0 \label{eq:rhoCons}\\
\text{and } &\min_{\pi\in\mathcal{S}_K} |\hat{\mathbf{P}}_{\pi(i)\pi(j)}-\mathbf{P}_{ij}|\overset{a.s.}{\longrightarrow} 0 \label{eq:pCons}
\end{align}
for all $i,j\in[K]$ as $n\to\infty$.
\label{cor:PhatCons}
\end{corollary}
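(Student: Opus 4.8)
\textbf{Proof plan for Corollary~\ref{cor:PhatCons}.}

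The plan is to leverage Theorem~\ref{thm:main} (or Corollary~\ref{cor:consXY}), which guarantees that almost always the number of mis-assigned nodes under the best permutation $\pi^*$ is at most $c \log n$ for the explicit constant $c = \tfrac{2^3 3^2 6}{\alpha^5\beta^2\gamma^5}$. Throughout I would fix such an optimal permutation $\pi^*$ for each $n$; by relabeling blocks via $\pi^*$ we may assume without loss of generality that $\hat\tau$ agrees with $\tau$ on all but a set $M_n \subseteq [n]$ of ``bad'' nodes with $|M_n| \le c\log n$ almost always. The strategy then splits into the $\hat\rho$ claim and the $\hat{\mathbf{P}}$ claim.

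For~\eqref{eq:rhoCons}: write $\hat n_k - n_k = |\{u : \hat\tau(u)=k\}| - |\{u:\tau(u)=k\}|$. Every node contributing to this difference must lie in $M_n$, so $|\hat n_k - n_k| \le |M_n| \le c\log n$ almost always, hence $|\hat\rho_k - \rho_k| \le |\hat n_k/n - n_k/n| + |n_k/n - \rho_k| \le (c\log n)/n + |n_k/n - \rho_k|$. The first term tends to $0$ deterministically, and $n_k/n \overset{a.s.}{\to} \rho_k$ by the strong law of large numbers applied to the i.i.d.\ block indicators $\mathbf{1}\{\tau(u)=k\}$. This gives~\eqref{eq:rhoCons}.

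For~\eqref{eq:pCons}, fix $i,j$ (the $i=j$ case is analogous with the $\hat n_i^2 - \hat n_i$ normalization and is not harder). I would interpolate through the ``oracle'' estimator $\bar{\mathbf{P}}_{ij}$ built from the true $\tau$ rather than $\hat\tau$, and bound $|\hat{\mathbf{P}}_{ij} - \mathbf{P}_{ij}| \le |\hat{\mathbf{P}}_{ij} - \bar{\mathbf{P}}_{ij}| + |\bar{\mathbf{P}}_{ij} - \mathbf{P}_{ij}|$. The second term goes to $0$ almost always: conditioned on $\tau$, $\bar{\mathbf{P}}_{ij}$ is an average of $n_i n_j - O(n)$ independent Bernoulli$(\mathbf{P}_{ij})$ variables with $n_i, n_j \ge \gamma n$ almost always, so a Hoeffding bound plus Borel--Cantelli (exactly as in Proposition~\ref{prop:froConv}) controls it. The first term is where the mis-assigned nodes enter: changing the index sets from $\tau^{-1}(i)\times\tau^{-1}(j)$ to $\hat\tau^{-1}(i)\times\hat\tau^{-1}(j)$ alters at most $O(|M_n|\cdot n) = O(n\log n)$ of the $\Theta(n^2)$ summands, and each summand is in $\{0,1\}$; combined with the fact that the normalizing denominators $\hat n_i \hat n_j$ and $n_i n_j$ are both $\Theta(n^2)$ and differ by only $O(n\log n)$, one gets $|\hat{\mathbf{P}}_{ij} - \bar{\mathbf{P}}_{ij}| = O((n\log n)/n^2) = O((\log n)/n) \to 0$ almost always.

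The main obstacle is purely bookkeeping: carefully tracking how the at most $c\log n$ mislabeled nodes propagate into both the numerator (at most $|M_n|$ rows and $|M_n|$ columns of the $n\times n$ block of $\mathbf{A}$ are affected, so at most $2n|M_n|$ entries change) and the denominator of $\hat{\mathbf{P}}_{ij}$, and checking that after dividing by the $\Theta(n^2)$ denominator the perturbation is $o(1)$. Once one is careful that the ``bad set'' $M_n$ is a single set working simultaneously for all $k$ and all $(i,j)$ under one permutation $\pi^*$, and that all the ``almost always'' events (the misclassification bound, $n_i \ge \gamma n$, the Hoeffding deviations for each oracle $\bar{\mathbf{P}}_{ij}$, and the SLLN for each $n_k/n$) are finitely many and hence can be intersected, the result follows.
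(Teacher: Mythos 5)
Your proposal is correct and takes essentially the same approach as the paper: the paper's entire proof is the single remark that the result is ``immediate from Theorem~\ref{thm:main} and the law of large numbers,'' and your argument is exactly that skeleton with the bookkeeping made explicit. The details you supply --- one permutation $\pi^*$ and one bad set $M_n$ of size $O(\log n)$ serving all indices, the SLLN for $n_k/n$, the oracle estimator $\bar{\mathbf{P}}_{ij}$ handled by Hoeffding and Borel--Cantelli, and the $O(n\log n)$ perturbation of numerator and denominator --- are the right ones and close the argument.
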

The proof is immediate from Theorem~\ref{thm:main} and the law of large numbers.

If we take $(\hat{\bm{\nu}},\hat{\bm{\mu}})$ to be the adjacency spectral embedding of $\hat{\mathbf{P}}$ then we also have that $\hat{\bm{\nu}}$ and $\hat{\bm{\mu}}$ provide consistent estimates for $(\bm{\nu},\bm{\mu})$, the adjacency spectral embedding of $\mathbf{P}$, in the following sense. 
\begin{corollary} \label{cor:numuHatCons}
Under the conditions of Theorem~\ref{thm:main}, with probability 1 there exists a sequence of orthogonal matrices $\mathbf{R}_1^{(n)},\mathbf{R}_2^{(n)}\in \mathbb{R}^{d\times d}$ such that
\begin{equation}
\|\hat{\bm{\nu}}-\bm{\nu}\mathbf{R}_1^{(n)}\|_F \to 0\text{ and } \|\hat{\bm{\mu}}-\bm{\mu}\mathbf{R}_2^{(n)}\|_F \to 0.
\label{eq:nuHatCons}
\end{equation}
\end{corollary}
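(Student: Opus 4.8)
The plan is to reduce the corollary to the almost-sure convergence $\|\hat{\mathbf{P}}^{(d)}-\mathbf{P}\|_F\to 0$, where $\hat{\mathbf{P}}^{(d)}$ is the best rank-$d$ approximation of $\hat{\mathbf{P}}$, and then to prove a deterministic perturbation statement for the scaled embedding using the Davis--Kahan theorem already in hand. First I would relabel the estimated blocks by the permutation attaining the minimum in Theorem~\ref{thm:main} (this absorbs the permutation in the conclusion, and off an $O(\log n)$ set we then have $\hat{\tau}=\tau$); the law-of-large-numbers argument behind Corollary~\ref{cor:PhatCons} gives $\hat{\mathbf{P}}^{(n)}\to\mathbf{P}$ entrywise, hence $\|\hat{\mathbf{P}}^{(n)}-\mathbf{P}\|_F\to 0$ almost surely. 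Because $\mathrm{rank}(\mathbf{P})=d$ and the eigenvalues of $\bm{\nu}^T\bm{\nu}$ and $\bm{\mu}^T\bm{\mu}$ exceed $\alpha$, we have $\sigma_d(\mathbf{P})\geq\alpha>0=\sigma_{d+1}(\mathbf{P})$, so Weyl's inequality makes $\sigma_d(\hat{\mathbf{P}})\geq\alpha/2$ eventually and $\sigma_i(\hat{\mathbf{P}})\to 0$ for $i>d$; therefore $\|\hat{\mathbf{P}}^{(d)}-\hat{\mathbf{P}}\|_F\to 0$ and so $\|\hat{\mathbf{P}}^{(d)}-\mathbf{P}\|_F\to 0$. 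Writing the rank-$d$ parts of the singular value decompositions as $\hat{\mathbf{P}}^{(d)}=\hat{\mathbf{U}}\hat{\bm{\Sigma}}\hat{\mathbf{V}}^T$ and $\mathbf{P}=\mathbf{U}_P\bm{\Sigma}_P\mathbf{V}_P^T$, we have $\hat{\bm{\nu}}=\hat{\mathbf{U}}\hat{\bm{\Sigma}}^{1/2}$, $\hat{\bm{\mu}}=\hat{\mathbf{V}}\hat{\bm{\Sigma}}^{1/2}$, $\bm{\nu}=\mathbf{U}_P\bm{\Sigma}_P^{1/2}$ and $\bm{\mu}=\mathbf{V}_P\bm{\Sigma}_P^{1/2}$.

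Next I would apply Theorem~\ref{thm:DK} to the symmetric dilations $\left(\begin{smallmatrix}0 & \mathbf{P}\\ \mathbf{P}^T & 0\end{smallmatrix}\right)$ and $\left(\begin{smallmatrix}0 & \hat{\mathbf{P}}^{(d)}\\ (\hat{\mathbf{P}}^{(d)})^T & 0\end{smallmatrix}\right)$, whose positive eigenvalues are the singular values of the respective matrices with eigenvectors $\tfrac1{\sqrt2}\left(\begin{smallmatrix}u_i\\ v_i\end{smallmatrix}\right)$, using the interval $\mathcal{S}=[\alpha/2,\infty)$ so that the relevant spectral gap is $\delta\geq\alpha>0$, a constant. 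This produces a \emph{single} orthogonal $\mathbf{R}^{(n)}\in\mathbb{R}^{d\times d}$ with $\|\hat{\mathbf{U}}-\mathbf{U}_P\mathbf{R}^{(n)}\|_F\to 0$ and $\|\hat{\mathbf{V}}-\mathbf{V}_P\mathbf{R}^{(n)}\|_F\to 0$, while Weyl's inequality gives $\hat{\bm{\Sigma}}^{1/2}\to\bm{\Sigma}_P^{1/2}$ (the matrix square root being continuous near the positive-definite $\bm{\Sigma}_P^{1/2}$). One then decomposes $\hat{\bm{\nu}}-\bm{\nu}\mathbf{R}^{(n)}=(\hat{\mathbf{U}}-\mathbf{U}_P\mathbf{R}^{(n)})\hat{\bm{\Sigma}}^{1/2}+\mathbf{U}_P\mathbf{R}^{(n)}(\hat{\bm{\Sigma}}^{1/2}-\bm{\Sigma}_P^{1/2})+\mathbf{U}_P(\mathbf{R}^{(n)}\bm{\Sigma}_P^{1/2}-\bm{\Sigma}_P^{1/2}\mathbf{R}^{(n)})$, whose first two summands tend to $0$; the identical decomposition with $\mathbf{V}_P$ controls $\hat{\bm{\mu}}-\bm{\mu}\mathbf{R}^{(n)}$.

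The step I expect to be the main obstacle is the third summand, the commutator $\mathbf{R}^{(n)}\bm{\Sigma}_P^{1/2}-\bm{\Sigma}_P^{1/2}\mathbf{R}^{(n)}$: a generic orthogonal matrix need not commute with $\bm{\Sigma}_P^{1/2}$, and this non-commutativity is exactly the extra difficulty the \emph{scaled} embedding $(\hat{\bm{\nu}},\hat{\bm{\mu}})$ carries beyond the unscaled embedding clustered in Theorem~\ref{thm:main}. The resolution is that the particular $\mathbf{R}^{(n)}$ produced above is forced to \emph{almost} commute with $\bm{\Sigma}_P$: substituting $\hat{\mathbf{U}}=\mathbf{U}_P\mathbf{R}^{(n)}+o(1)$ and $\hat{\mathbf{V}}=\mathbf{V}_P\mathbf{R}^{(n)}+o(1)$ into $\hat{\mathbf{P}}^{(d)}=\hat{\mathbf{U}}\hat{\bm{\Sigma}}\hat{\mathbf{V}}^T$, using $\hat{\mathbf{P}}^{(d)}\to\mathbf{P}=\mathbf{U}_P\bm{\Sigma}_P\mathbf{V}_P^T$ and $\hat{\bm{\Sigma}}\to\bm{\Sigma}_P$, and multiplying by $\mathbf{U}_P^T$ on the left and $\mathbf{V}_P$ on the right gives $\mathbf{R}^{(n)}\bm{\Sigma}_P(\mathbf{R}^{(n)})^T\to\bm{\Sigma}_P$, that is, $\|\mathbf{R}^{(n)}\bm{\Sigma}_P-\bm{\Sigma}_P\mathbf{R}^{(n)}\|_F\to 0$. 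Since $\bm{\Sigma}_P^{1/2}\succeq\sqrt{\alpha}\,\mathbf{I}\succ 0$, the linear map $\mathbf{C}\mapsto\bm{\Sigma}_P^{1/2}\mathbf{C}+\mathbf{C}\bm{\Sigma}_P^{1/2}$ is invertible with inverse of operator norm at most $1/(2\sqrt{\alpha})$, and a one-line computation shows it carries $\mathbf{R}^{(n)}\bm{\Sigma}_P^{1/2}-\bm{\Sigma}_P^{1/2}\mathbf{R}^{(n)}$ to $\mathbf{R}^{(n)}\bm{\Sigma}_P-\bm{\Sigma}_P\mathbf{R}^{(n)}$; hence $\|\mathbf{R}^{(n)}\bm{\Sigma}_P^{1/2}-\bm{\Sigma}_P^{1/2}\mathbf{R}^{(n)}\|_F\to 0$ as well. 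Feeding this back, all three summands in each decomposition vanish, so $\|\hat{\bm{\nu}}-\bm{\nu}\mathbf{R}^{(n)}\|_F\to 0$ and $\|\hat{\bm{\mu}}-\bm{\mu}\mathbf{R}^{(n)}\|_F\to 0$ almost surely; taking $\mathbf{R}_1^{(n)}=\mathbf{R}_2^{(n)}=\mathbf{R}^{(n)}$ proves the corollary, in fact with a common rotation, which is slightly stronger than stated.
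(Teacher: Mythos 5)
Your proof is correct. The paper itself gives no argument beyond the remark that the proof ``relies on applications of the Davis--Kahan Theorem in a similar way to Lemma~\ref{lem:1}'', and your write-up follows that strategy in spirit but differs in two concrete ways. First, where the paper's hint (and the two rotations $\mathbf{R}_1^{(n)},\mathbf{R}_2^{(n)}$ in the statement) suggests mimicking Lemma~\ref{lem:1} by applying Davis--Kahan separately to $\hat{\mathbf{P}}\hat{\mathbf{P}}^T$ versus $\mathbf{P}\mathbf{P}^T$ and to $\hat{\mathbf{P}}^T\hat{\mathbf{P}}$ versus $\mathbf{P}^T\mathbf{P}$, you apply it once to the symmetric dilations with $\mathcal{S}=[\alpha/2,\infty)$ and gap $\delta\geq\alpha$; this is legitimate (the columns of the stacked matrices formed from $(\mathbf{U}_P,\mathbf{V}_P)$ and $(\hat{\mathbf{U}},\hat{\mathbf{V}})$, scaled by $1/\sqrt{2}$, are indeed orthonormal bases of the relevant eigenspace sums, which eventually have dimension exactly $d$) and yields the slightly stronger conclusion of a single common rotation. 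Second, and more substantively, you make explicit the step the paper's one-line sketch glosses over: the corollary concerns the \emph{scaled} embedding, so after aligning singular vectors one still must control $\mathbf{R}\bm{\Sigma}_P^{1/2}-\bm{\Sigma}_P^{1/2}\mathbf{R}$. Your derivation of $\mathbf{R}\bm{\Sigma}_P\mathbf{R}^T\to\bm{\Sigma}_P$ from $\hat{\mathbf{P}}^{(d)}\to\mathbf{P}$, followed by inverting the map $\mathbf{C}\mapsto\bm{\Sigma}_P^{1/2}\mathbf{C}+\mathbf{C}\bm{\Sigma}_P^{1/2}$ (invertible with norm bound $1/(2\sqrt{\alpha})$ since $\sigma_d(\mathbf{P})\geq\alpha$, by the same product-of-positive-semidefinite-matrices bound used in Lemma~\ref{lem:singBoundXY}), is a clean resolution that also avoids any case analysis on repeated singular values of $\mathbf{P}$. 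The remaining ingredients---absorbing the block-label permutation from Theorem~\ref{thm:main}/Corollary~\ref{cor:PhatCons} (which the paper's statement leaves implicit as well), Weyl's inequality to pass from $\hat{\mathbf{P}}$ to its rank-$d$ truncation, and the telescoping decomposition of $\hat{\bm{\nu}}-\bm{\nu}\mathbf{R}$---are all sound, so I see no gaps.
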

The proof relies on applications of the Davis-Kahan Theorem in a similar way to Lemma~\ref{lem:1}. 

\section{Undirected Version}\label{sec:undirected}
We now present the undirected version of the stochastic blockmodel and state the main result. The setting and notation are from Section~\ref{sec:notation}.

For the undirected version of the stochastic blockmodel, the matrix $\mathbf{P}$ is symmetric and $\mathbf{P}_{ij}=\mathbf{P}_{ji}$ gives the probability of an edge between a node in block $i$ and a node in block $j$ for each $i,j\in[K]$. Conditioned on $\tau$, $\mathbf{A}_{uv}$ is a Bernoulli random variable with parameter $\mathbf{P}_{\tau(u),\tau(v)}$ for all $u\neq v\in[n]$. As $\mathbf{A}$ is symmetric, all entries of $\mathbf{A}$ are not independent, but the entries are independent provided two entries do not correspond to the same undirected edge.

For the undirected version a re-parametrization of the stochastic block model as a RDPG model as in Section~\ref{sec:sbmRdpg} is not always possible.
However, we can find $\bm{\nu},\bm{\mu}\in\mathbb{R}^{K\times d}$ such that $\bm{\nu\mu}^T=\mathbf{P}$ 
and $\bm{\nu}$ and $\bm{\mu}$ have equal columns up to a possible change in sign in each column. This means the rows of $\bm{\nu}$ and $\bm{\mu}$ are distinct so it is not necessary to cluster on the concatenated embeddings. 
Instead, we consider clustering the rows of $\tilde{\mathbf{U}}$ or $\tilde{\mathbf{X}}$, which gives a factor of two improvement in misclassification rate. 
\begin{theorem} \label{thm:mainUD}
Under the undirected version of the stochastic blockmodel, suppose that the number of blocks $K$ and the latent feature dimension $d$ are known.  Let $\hat{\tau}:V\mapsto [K]$ be a block assignment function according to a clustering  of the rows of $\tilde{\mathbf{U}}$ satisfying the criterion in Eqn.~\ref{eq:minC}. It almost always holds that
\begin{equation}
\min_{\pi\in\mathcal{S}_K} |\{ u\in V:\tau(u)\neq \pi(\hat{\tau}(u)) \}|\leq \frac{2^2 3^2 6}{\alpha^5\beta^2\gamma^5}\log n.
\label{eq:consUD}
\end{equation}
\end{theorem}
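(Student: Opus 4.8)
I expect the proof to be a nearly line-by-line transcription of the proof of Theorem~\ref{thm:main}, with three adaptations forced by the undirected structure: (i) $\mathbf{A}$ is now symmetric, so the embedding lives in $\mathbb{R}^{n\times d}$ and one clusters the rows of $\tilde{\mathbf{U}}$ alone rather than of $\tilde{\mathbf{W}}=[\tilde{\mathbf{U}}|\tilde{\mathbf{V}}]$; (ii) not all entries of $\mathbf{A}$ are independent; and (iii) the RDPG reparametrization of Section~\ref{sec:sbmRdpg} is unavailable, and is replaced by the factorization $\mathbf{P}=\bm{\nu}\bm{\mu}^T$ with $\bm{\nu},\bm{\mu}\in\mathbb{R}^{K\times d}$ whose columns agree up to sign, obtained from the spectral decomposition $\mathbf{P}=\mathbf{O}\bm{\Lambda}\mathbf{O}^T$ by $\bm{\nu}=\mathbf{O}|\bm{\Lambda}|^{1/2}$, $\bm{\mu}=\mathbf{O}\,\mathrm{sgn}(\bm{\Lambda})|\bm{\Lambda}|^{1/2}$. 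Since $\mathrm{rank}(\bm{\mu})=d$, distinctness of the rows of $\mathbf{P}$ — which, by symmetry, is the only distinctness hypothesis available — forces the rows of $\bm{\nu}$ to be pairwise distinct, so it is legitimate to cluster $\tilde{\mathbf{U}}$ without concatenating $\tilde{\mathbf{V}}$.

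First I would record that Proposition~\ref{prop:froConv} and its corollaries survive. Independence enters the proof of Proposition~\ref{prop:froConv} only through the conditional independence of $\mathbf{A}_{uw}$ and $\mathbf{A}_{vw}$ for $u\neq v$; in the undirected model these are the distinct unordered pairs $\{u,w\}$ and $\{v,w\}$ and remain independent, so the Hoeffding/Borel--Cantelli argument goes through with the same constant and yields $\|\mathbf{A}\mathbf{A}^T-\mathbf{X}\mathbf{Y}^T(\mathbf{X}\mathbf{Y}^T)^T\|_F\leq\sqrt{3}n^{3/2}\sqrt{\log n}$ almost always, where now $\mathbf{X}\mathbf{Y}^T=\mathbf{M}\mathbf{P}\mathbf{M}^T$ (with $\mathbf{M}$ the membership matrix) is symmetric. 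The singular-value bounds of Lemma~\ref{lem:singBoundXY} and Corollary~\ref{cor:singBoundA} use only the block structure and the constants $\alpha,\gamma$, so they carry over verbatim: $\mathrm{rank}(\mathbf{X}\mathbf{Y}^T)=d$, $\sigma_d(\mathbf{X}\mathbf{Y}^T)\geq\alpha\gamma n$, $\sigma_{d+1}(\mathbf{A})\leq 3^{1/4}n^{3/4}\log^{1/4}n$, $\sigma_1(\mathbf{A})\leq n$. As in the directed case, $\mathbf{X}\mathbf{Y}^T=\mathbf{M}\mathbf{P}\mathbf{M}^T$ makes the rows of $\mathbf{U}$ depend only on $\tau$, so $\mathbf{U}$ has at most $K$ distinct rows, and the argument of Lemma~\ref{lem:eigVecSep} separates them: $\|U_u-U_v\|\geq\beta\sqrt{\alpha\gamma}\,n^{-1/2}$ whenever $\tau(u)\neq\tau(v)$.

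Next I would prove the undirected analogue of Lemma~\ref{lem:1}: almost always there is an orthogonal $\mathbf{R}\in\mathbb{R}^{d\times d}$ with $\|\mathbf{U}\mathbf{R}-\tilde{\mathbf{U}}\|_F\leq\frac{\sqrt{6}}{\alpha^2\gamma^2}\sqrt{\frac{\log n}{n}}$. This is one application of the Davis--Kahan theorem to $\mathbf{H}=\mathbf{X}\mathbf{Y}^T(\mathbf{X}\mathbf{Y}^T)^T$ and $\mathbf{H}'=\mathbf{A}\mathbf{A}^T$, with $\mathcal{S}=[t,\infty)$ for any $t$ with $\sigma_{d+1}(\mathbf{A})^2<t\leq\min\{\sigma_d(\mathbf{A})^2,\sigma_d(\mathbf{X}\mathbf{Y}^T)^2\}$ (such $t$ exists almost always for $n$ large, since $\sqrt 3 n^{3/2}\sqrt{\log n}<(\alpha\gamma n)^2$); then the in-$\mathcal{S}$ eigenspace of $\mathbf{H}$ is the full range of $\mathbf{X}\mathbf{Y}^T$, that of $\mathbf{H}'$ is the span of the top $d$ left singular vectors of $\mathbf{A}$, and the gap is $\delta=\sigma_d(\mathbf{X}\mathbf{Y}^T)^2\geq(\alpha\gamma n)^2$, so $\frac{\sqrt 2}{\delta}\|\mathbf{H}-\mathbf{H}'\|_F\leq\frac{\sqrt 6}{\alpha^2\gamma^2}\sqrt{\log n/n}$. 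The key point is that, unlike Lemma~\ref{lem:1}, there is no second block $\tilde{\mathbf{V}}$ to account for, so the extra factor $\sqrt 2$ from concatenating the two embeddings is absent — this is exactly the source of the improvement from $2^3$ to $2^2$ in the stated bound.

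The conclusion is then the ball-counting argument from the proof of Theorem~\ref{thm:main}, with $2d$ everywhere replaced by $d$. Let $\hat{\bm{\psi}},\hat{\tau}$ realize the clustering criterion on the rows of $\tilde{\mathbf{U}}$, and let $\mathbf{C}\in\mathbb{R}^{n\times d}$ have $u$-th row $\hat{\psi}_{\tau(u)}$. Since $\mathbf{U}\mathbf{R}$ has at most $K$ distinct rows, optimality gives $\|\mathbf{C}-\tilde{\mathbf{U}}\|_F\leq\|\mathbf{U}\mathbf{R}-\tilde{\mathbf{U}}\|_F$, hence $\|\mathbf{C}-\mathbf{U}\mathbf{R}\|_F\leq 2\frac{\sqrt 6}{\alpha^2\gamma^2}\sqrt{\log n/n}$ by the triangle inequality. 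Putting balls of radius $r=\frac{\beta}{3}\sqrt{\alpha\gamma}\,n^{-1/2}$ around the $K$ distinct rows of $\mathbf{U}$ (disjoint almost always by the separation above), the number of indices $u$ with $\|C_u-U_u\mathbf{R}\|>r$ is at most $\|\mathbf{C}-\mathbf{U}\mathbf{R}\|_F^2/r^2=\frac{2^2 3^2 6}{\alpha^5\beta^2\gamma^5}\log n$; since each block has size $n_i>\gamma n$, which almost always exceeds this quantity, each ball contains exactly one distinct row of $\mathbf{C}$, and the misassigned nodes — after matching labels by the optimal $\pi\in\mathcal{S}_K$ — are precisely those bad indices, giving the bound. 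The one genuinely new point, and the step I would be most careful about, is item (iii): confirming that clustering $\tilde{\mathbf{U}}$ alone recovers the blocks (i.e. that the distinct rows of $\mathbf{U}$ are already well separated even though the undirected model furnishes no second matrix) and checking that the improved constant is honest, in particular that the Davis--Kahan interval can be chosen so no spurious factor of $2$ creeps back in; everything else is routine transcription.
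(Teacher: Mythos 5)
Your proposal is correct and follows essentially the same route as the paper, which states Theorem~\ref{thm:mainUD} with only a sketch of the adaptation: the symmetric factorization $\mathbf{P}=\bm{\nu}\bm{\mu}^T$ with columns equal up to sign makes the rows of $\bm{\nu}$ distinct, so one clusters $\tilde{\mathbf{U}}$ alone, and a single Davis--Kahan application (no concatenation, hence no extra $\sqrt{2}$) turns the directed bound $2^3$ into $2^2$ exactly as you compute. Your verification that Proposition~\ref{prop:froConv}, Lemma~\ref{lem:singBoundXY}, Corollary~\ref{cor:singBoundA}, and Lemma~\ref{lem:eigVecSep} survive the loss of full independence and of the RDPG reparametrization is precisely the content the paper leaves implicit.
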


Corollary~\ref{cor:consXY} holds when clustering on $\tilde{\mathbf{X}}$, with the same factor of 2 improvement in  misclassification rate. Corollaries~\ref{cor:PhatCons} and  \ref{cor:numuHatCons} also hold without change. 

\section{Empirical Results} \label{sec:results}
We evaluated this procedure and compared it to the spectral clustering procedure of \cite{rohe10:_spect_stoch_block_model} for both simulated data (\S~\ref{sec:sim}) and using a Wikipedia hyperlink graph (\S~\ref{sec:realData}).

\subsection{Simulated Data}\label{sec:sim}

To illustrate the effectiveness of the adjacency spectral embedding, we simulate random undirected graphs generated from the following stochastic blockmodel:
\begin{equation}
 \mathbf{P}=\begin{pmatrix} 0.42 & 0.42 \\ 0.42 & 0.5 \end{pmatrix}\text{ and } \rho=(.6, .4)^T
\end{equation}
For each $n\in\{500,600,\dotsc,2000\}$, we simulated 100 monte carlo replicates from this model conditioned on the fact that $|\{u\in[n]: \tau(u)=i\}|=\rho_i n$ for each $i\in\{1,2\}$. In this model we assume that $d=2$ and $K=2$ are known. 

We evaluated four different embedding procedures and for each embedding we used $K$-means clustering, which attempts to iteratively find the solution to Eqn.~\ref{eq:minC}, to generate the node assignment function $\hat{\tau}$. The four embedding procedure are the scaled and unscaled adjacency spectral embedding as well as the scaled and unscaled Laplacian spectral embedding. The Laplacian spectral embedding uses the same spectral decomposition but works with the normalized Laplacian (as defined in \cite{rohe10:_spect_stoch_block_model}) rather then the adjacency matrix. The normalized Laplacian is given by  $\mathbf{L}=\mathbf{D}^{-1/2}\mathbf{A}\mathbf{D}^{-1/2}$ where $\mathbf{D}\in\mathbb{R}^{n\times n}$ is diagonal with $\mathbf{D}_{vv}=\mathrm{deg}(v)$, the degree of node $v$.

\begin{figure*}%
\begin{center}
\includegraphics[width=.9\textwidth]{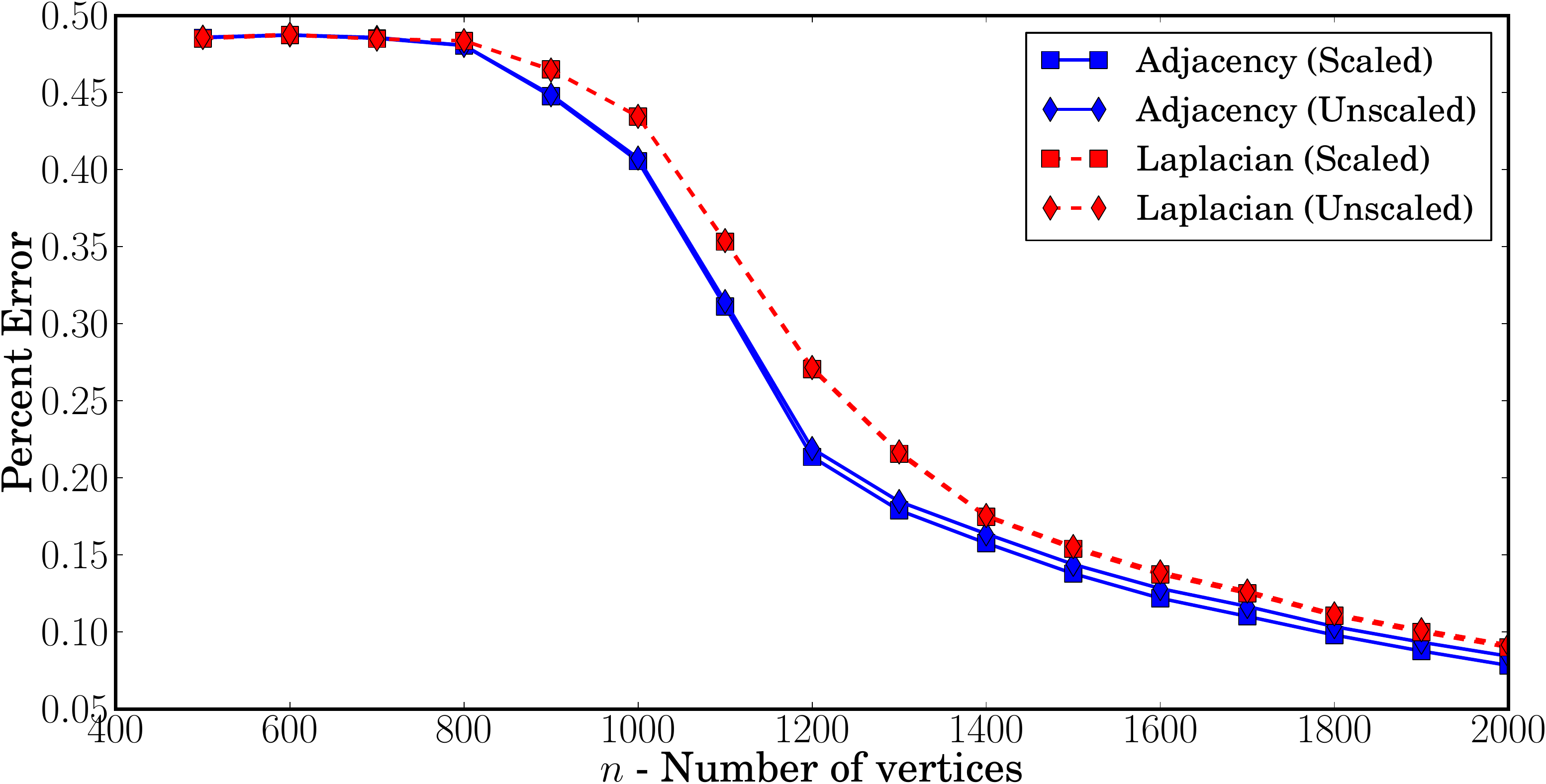}
\end{center}
\caption{Mean error for 100 monte carlo replicates using $K$-means on four different embedding procedures.}%
\label{fig:simError}%
\end{figure*}

We evaluated the performance of the node assignments by computing the percentage of mis-assigned nodes, $\min_{\pi\in S_2}|\{u\in[n]:\tau(u)\neq \pi(\hat{\tau}(u))\}|/n$, as in Eqn.~\ref{eq:consistency1}. Figure~\ref{fig:simError} demonstrates that performance of $K$-means on all four embeddings improves with increasing number of nodes. It also demonstrates (via a paired Wilcoxon test) that for these model parameters the adjacency embedding is superior to the Laplacian embeddings for large $n$. In fact, for $n\geq 1400$ we observed that for each simulated graph the scaled adjacency embedding always performed better than both Laplacian embeddings. We note that these model parameters were specifically constructed to demonstrate a case where the adjacency embedding is superior to the Laplacian embedding.

Figure~\ref{fig:simScatter} shows an example of the scaled adjacency (left) and scaled Laplacian (right) spectral embeddings. The graph has 2000 nodes and the points are colored according to their block membership. The dashed line shows the discriminant boundary given by the $K$-means algorithm with $K=2$.

\begin{figure*}
\begin{center}
\includegraphics[width=.9\textwidth]{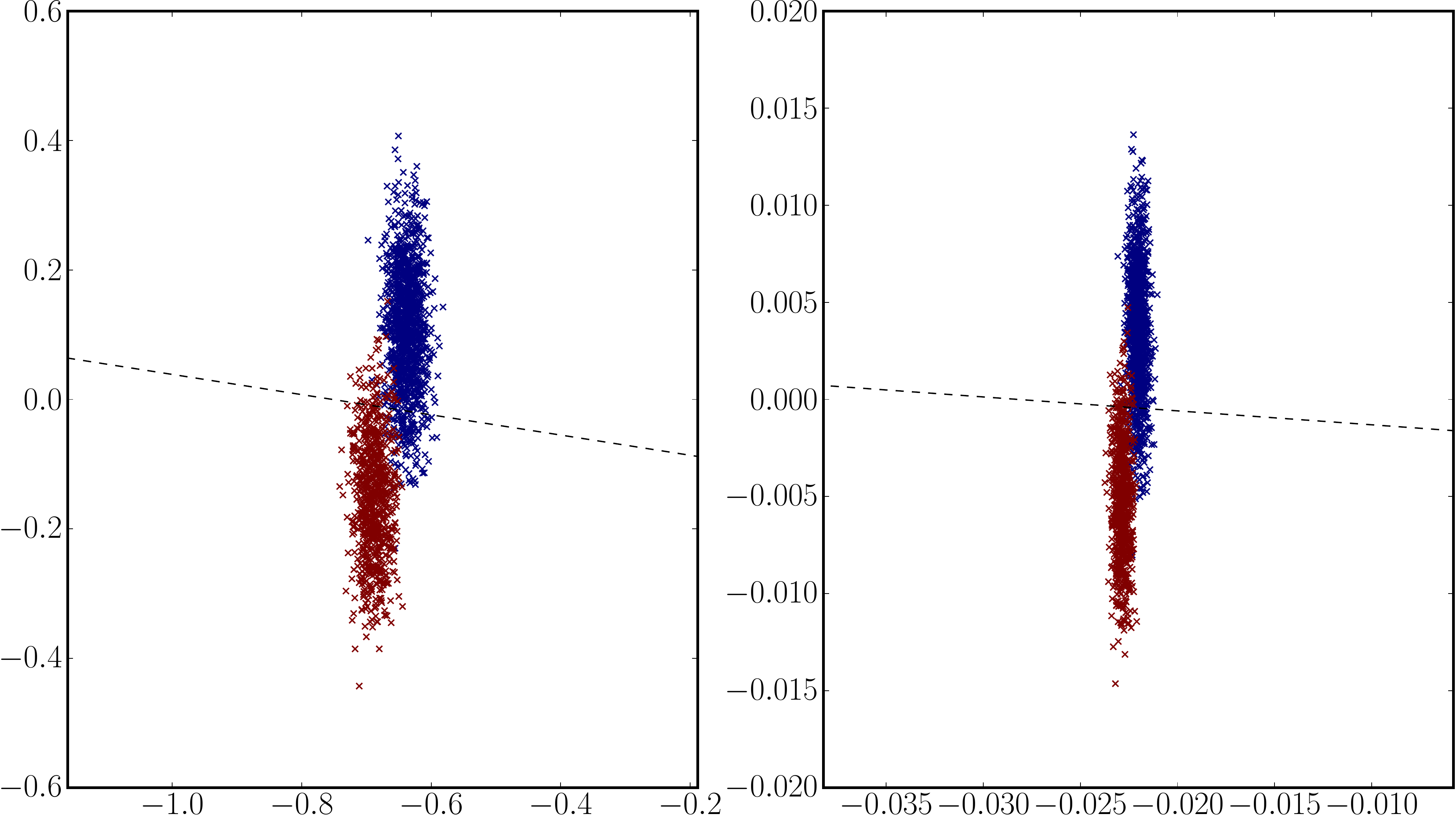}
\end{center}
\caption{Scatter plots of the scaled adjacency (left) and Laplacian (right) embeddings of a 2000 node graph. }%
\label{fig:simScatter}%
\end{figure*}

\subsection{Wikipedia Graph}\label{sec:realData}
For this data, each node in the graph corresponds to a Wikipedia page and the edges correspond to the presence of a hyperlink between two pages (in either direction). We consider this as an undirected graph. Every article within two hyperlinks of the article ``Algebraic Geometry'' was included as a node in the graph. This resulted in $n=1382$ nodes.  Additionally, each document, and hence each node, was manually labeled as one of the following: Category, Person, Location, Date and Math.

To illustrate the utility of this algorithm we embedded this graph using the scaled adjacency and Laplacian procedures. Figure~\ref{fig:wikiScatter} shows the two embeddings for $d=2$. The points are colored according to their manually assigned labels. First we note that on the whole the two embeddings look moderately different. In fact, for the adjacency embedding one can see that the orange points are well separated from the remaining data. On the other hand, with the Laplacian embedding we can see that the red points are somewhat separated from the remaining data. The dashed lines show the result boundary as determined by $K$-means with $K=2$.

To evaluate the performance we considered the 5 different tasks of identifying one block and grouping the remaining blocks together. For each of the 5 blocks, we compared each of the one-vs-all block labels to the estimated labels from $K$-means, with $K=2$, on the two embeddings. Table~\ref{tab:wg} shows the number of incorrectly assigned nodes, as in Eqn.~\ref{eq:consistency1}, as well as the adjusted Rand index \citep{Hubert1985}. The adjusted Rand index (ARI) has the property that the optimal value is 1 and a value of zero indicates the expected value if the labels were assigned randomly. 

\begin{table}[b]\small
 \begin{tabular}{|r|cc|cc|cc|cc|cc|}\hline
   & \multicolumn{2}{c|}{Category (119)}& \multicolumn{2}{c|}{Person (372)}& \multicolumn{2}{c|}{Location (270)}& \multicolumn{2}{c|}{Date (191)}& \multicolumn{2}{c|}{Math (430)} \\ 
  & Error & ARI & Error & ARI & Error & ARI & Error & ARI & Error & ARI \\\hline
A &  242  & -0.08 & 495 & -0.07 & 341 & 0.01 & \textbf{130} & \textbf{0.47} & 543 & 0.06 \\\hline
L & 299   & -0.02 & 495 & -0.02 & 476 & -0.1 & 401 & -0.10 & \textbf{350} & \textbf{0.19} \\\hline
 \end{tabular}
\normalsize
\caption{One versus all comparison of each block against the estimated $K$-means block assignments with $K=2$.}
\label{tab:wg}
\end{table}
We can see from this table that $K$-means on the adjacency embedding identifies the separation of the Date block from the other four while on the Laplacian embedding $K$-means identifies the separation of the Math block from the other four. This indicates that for this data set (and indeed more generally) the choice of embedding procedure will depend greatly on the desired exploitation task.

We note that for both embeddings, the clusters generated using $K$-means, with $K=5$, poorly reflect the manually assigned block memberships. We have not investigated beyond the illustrative 2-dimensional embeddings. 

\begin{figure*}
\begin{center}
\includegraphics[width=\textwidth]{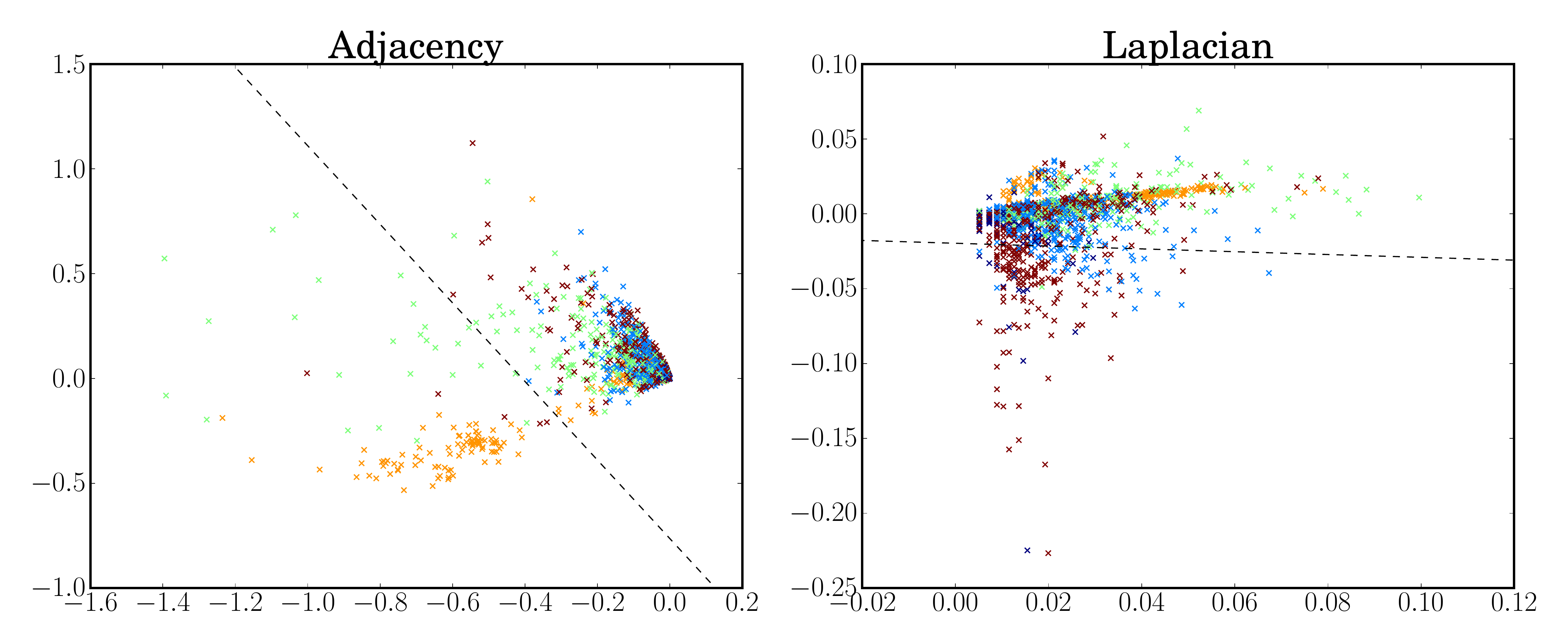}
\end{center}
\caption{Scatter plots for the Wikipedia graph. The left pane show the scaled adjacency embedding and the right pane show the scaled Laplacian embedding. Each point is colored according to the manually assigned labels. The dashed line represents the discriminant boundary determined by $K$-means with $K=2$.}%
\label{fig:wikiScatter}%
\end{figure*}

\section{Discussion} \label{sec:disc} 
Our simulations demonstrate that for a particular example of the stochastic blockmodel, the proportion of mis-assigned nodes will rapidly become small. Though our bound shows that the number of mis-assigned nodes will not grow faster than $O(\log n)$, in some instances this bound may be very loose. 
We also demonstrate that using the adjacency embedding over the Laplacian embedding can provide performance improvements in some settings. 
It is also clear from Figure~\ref{fig:simScatter} that the use of other unsupervised clustering techniques, such as Gaussian mixture modeling, will likely lead to further performance improvements.

On the Wikipedia graph, the two-dimensional embedding demonstrates that the adjacency embedding procedure provides an alternative to the Laplacian embedding and the two may have fundamentally different properties. Both the Date block and the Math block have some differentiating structure in the graph but these structures are illuminated more in one embedding then the other. This analysis suggests that further investigations into comparisons between the adjacency embedding and the Laplacian embeddings will be fruitful.

Our empirical analysis indicates that the adjacency spectral embeddings and the Laplacian spectral embeddings are strongly related while the two embeddings may emphasize different aspects of the particular graph. \cite{rohe10:_spect_stoch_block_model} used similar techniques to show consistency of block assignment on the Laplacian embedding and achieved the same asymptotic rates of misclassification. Indeed, if  one considers the embedding given by $\mathbf{D}^{-1/2}\tilde{\mathbf{X}}$, then this embedding will be very close to the scaled Laplacian embedding and may provide a link between the two procedures. 

Note that consistent block assignments are possible using either the singular vectors or the scaled version of the singular vectors. The singular vectors themselves are essentially a whitened version of scaled singular vectors. Since the singular vectors are orthogonal, the estimated covariance of rows of the scaled vectors is proportional to the diagonal matrix given by the singular values of $\mathbf{A}$. This suggests that clustering using a criterion invariant to coordinate-scalings and rotations will likely have similar asymptotic properties. 

Critical to the proof is the bound provided by Proposition~\ref{prop:froConv}. Since this bound does not depend on the method for generating $\mathbf{Q}$, it suggests that extensions to this theorem are possible. One such extension is to take the number of blocks $K=K_n$ to go slowly to infinity. For $K_n$ growing, the parameters $\alpha$, $\beta$, and $\gamma$ are no longer constant in $n$, so we must impose conditions on these parameters. If we take $d$ fixed and assume these parameters go to 0 slowly, it is possible to allow $K_n=n^\epsilon$ for $\epsilon$ sufficiently small. Under these conditions, it can be shown that the number of incorrect block assignments is $o(n \gamma)$, which is negligible to block sizes. Our proof technique breaks down for $K_n=\Omega(n^{1/4})$ as Proposition~\ref{prop:froConv} no longer implies a gap in the singular values of $\mathbf{A}$. 

In order to avoid the model selection quagmire, we assumed in Theorem~\ref{thm:main} that the number of blocks $K$ and the latent feature dimension $d$ are known. However, the proof of this theorem suggests that both $K$ and and $d$ can be estimated consistently. Corollary~\ref{cor:singBoundA}, shows that all but $d$ of the singular values of $\mathbf{A}$ are less than $3^{1/4}n^{3/4}\log^{1/4} n$ for $n$ large enough. As discussed earlier, this shows that $\hat{d}=\max\{ i: \sigma_i(\mathbf{A}) > 3^{1/4}n^{3/4}\log^{1/4} n\}$ will be a consistent estimator for $d$.  Though this estimator is consistent, the required number of nodes for it to become accurate will depend highly on the sparsity of the graph, which controls the magnitude of the largest singular values of $A$. Furthermore, our bounds suggest that the number of nodes required for this estimate to be accurate will increase exponentially as the expected graph density decreases.

Estimating $K$ is more complicated, and we do not present a formal method to do so. We do note that the proof shows that most of the embedded vectors are concentrated around  $K$ separated points. An appropriate covering of the points by slowly vanishing balls would allow for a consistent estimate of $K$. More work is needed to provide model selection criteria which are practical to the practitioner.

Note that some practitioners may have estimates or bounds for the parameters $\mathbf{P}$ and $\rho$, derived from some prior study. In this case, provided bounds on $\alpha$, $\beta$, and $\gamma$ can be determined, the proof can be used to derive high probability bounds on the number of nodes that have been assigned to the incorrect block. This may also enable the practitioner to choose $n$ to optimize some misassignment and cost criteria.

The proofs above would remain valid if the diagonals of the adjacency matrix are modified provided that each modification is bounded. In fact, modifying the diagonals may improve the embedding to give lower numbers of misassignments. \citet{priebe11:_vertex} suggests replacing the diagonal element $\mathbf{A}_{uu}$ with $\mathrm{deg}(u)/(n-1)$ for each node $u\in[n]$. \citet{Scheinerman2010} provided an iterative algorithm to impute the diagonal. An optimal choice the diagonal is not known for general stochastic blockmodels.

Another practical concern is the possibility of missing data in the observed graph. One example may be that each edge in the true graph is only observed with probability $p$ in the observed graph. Our theory will be unaffected by this type of error since the observed graph is also distributed according to a stochastic blockmodel with edge probabilities $\mathbf{P}'=p\mathbf{P}$. As a result, asymptotic consistency remains valid. We may also allow $p$ to decrease slowly with $n$ and still achieve asymptotically negligible misassignments. However, typically the finite sample performance will if $p$ is small. 

Overall, the theory and results presented suggest that this embedding procedure is worthy of further investigation. The problems estimating $K$ and $d$, choosing between scaled and unscaled embedding and between the adjacency and the Laplacian will all be considered in future work. This work is also being generalized to more general latent position models.

Finally, under the stochastic blockmodel, our method will be less computationally demanding than ones which depend on maximizing likelihood or modularity criterion. Fast methods to compute singular value decompositions are possible, especially for sparse matrices. There are a plethora of methods for efficiently clustering points in Euclidean space. Overall, this embedding method may be valuable to the practitioner to provide a rapid method to identify blocks in networks.

\appendix
\section{Proofs of Technical Lemmas}

In this appendix, we prove the technical results stated in Section~\ref{sec:mainThm}.

\newtheorem*{lemmaSingBoundXY}{Lemma~\ref{lem:singBoundXY}}
\begin{lemmaSingBoundXY}
It almost always holds that $\alpha\gamma n\leq \sigma_d(\mathbf{XY}^T)$ and it always holds that $\sigma_{d+1}(\mathbf{XY}^T)=0$ and $\sigma_1(\mathbf{XY^T})\leq n$.
\end{lemmaSingBoundXY}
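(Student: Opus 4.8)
The plan is to exploit the block structure directly. Let $\mathbf{B}\in\{0,1\}^{n\times K}$ be the membership matrix with $\mathbf{B}_{ui}=\mathbf{1}\{\tau(u)=i\}$, so that by construction $\mathbf{X}=\mathbf{B}\bm{\nu}$ and $\mathbf{Y}=\mathbf{B}\bm{\mu}$, hence $\mathbf{XY}^T=\mathbf{B}\bm{\nu}\bm{\mu}^T\mathbf{B}^T=\mathbf{B}\mathbf{P}\mathbf{B}^T$ and $\mathbf{X}^T\mathbf{X}=\bm{\nu}^T\mathbf{D}_n\bm{\nu}$, $\mathbf{Y}^T\mathbf{Y}=\bm{\mu}^T\mathbf{D}_n\bm{\mu}$ where $\mathbf{D}_n:=\mathbf{B}^T\mathbf{B}=\mathrm{diag}(n_1,\dots,n_K)$. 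The two ``always'' claims are then immediate: since $\mathbf{XY}^T=\mathbf{X}\,\mathbf{Y}^T$ is a product of an $n\times d$ and a $d\times n$ matrix, $\mathrm{rank}(\mathbf{XY}^T)\le d$ and so $\sigma_{d+1}(\mathbf{XY}^T)=0$; and since every entry of $\mathbf{XY}^T$ equals $\langle\nu_{\tau(u)},\mu_{\tau(v)}\rangle=\mathbf{P}_{\tau(u)\tau(v)}\in[0,1]$, we get $\sigma_1(\mathbf{XY}^T)=\|\mathbf{XY}^T\|_2\le\|\mathbf{XY}^T\|_F\le n$.

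For the lower bound on $\sigma_d$, I would first record the probabilistic ingredient: by the strong law of large numbers $n_i/n\to\rho_i>\gamma$ almost surely, so the event $\{n_i>\gamma n\text{ for all }i\in[K]\}$ fails only finitely often, i.e.\ holds almost always; in particular $\mathbf{B}$ then has full column rank $K$ and $\mathbf{D}_n\succeq\gamma n\,\mathbf{I}_K$. Consequently $\mathbf{X}^T\mathbf{X}=\bm{\nu}^T\mathbf{D}_n\bm{\nu}\succeq\gamma n\,\bm{\nu}^T\bm{\nu}\succeq\alpha\gamma n\,\mathbf{I}_d$ almost always, and likewise $\mathbf{Y}^T\mathbf{Y}\succeq\alpha\gamma n\,\mathbf{I}_d$ almost always. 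In particular both $d\times d$ Gram matrices are nonsingular, so $\mathrm{rank}(\mathbf{XY}^T)=d$ and $\sigma_d(\mathbf{XY}^T)$ is its smallest nonzero singular value.

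The key step is to pass from the $n\times n$ matrix to a $d\times d$ one. We have $\sigma_d(\mathbf{XY}^T)^2$ equal to the smallest nonzero eigenvalue of $\mathbf{XY}^T(\mathbf{XY}^T)^T=\mathbf{X}(\mathbf{Y}^T\mathbf{Y})\mathbf{X}^T$. Using that $\mathbf{C}\mathbf{E}$ and $\mathbf{E}\mathbf{C}$ share the same nonzero eigenvalues, applied with $\mathbf{C}=\mathbf{X}$ and $\mathbf{E}=(\mathbf{Y}^T\mathbf{Y})\mathbf{X}^T$, the $d$ nonzero eigenvalues of $\mathbf{X}(\mathbf{Y}^T\mathbf{Y})\mathbf{X}^T$ are exactly the eigenvalues of the $d\times d$ matrix $(\mathbf{Y}^T\mathbf{Y})(\mathbf{X}^T\mathbf{X})$, all positive almost always. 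Finally, for symmetric positive semidefinite $\mathbf{G},\mathbf{H}$ one has $\lambda_{\min}(\mathbf{G}\mathbf{H})\ge\lambda_{\min}(\mathbf{G})\lambda_{\min}(\mathbf{H})$, because the eigenvalues of $\mathbf{GH}$ coincide with those of $\mathbf{G}^{1/2}\mathbf{H}\mathbf{G}^{1/2}\succeq\lambda_{\min}(\mathbf{H})\,\mathbf{G}\succeq\lambda_{\min}(\mathbf{H})\lambda_{\min}(\mathbf{G})\,\mathbf{I}$. Combining the last two paragraphs, almost always $\sigma_d(\mathbf{XY}^T)^2=\lambda_{\min}\!\left((\mathbf{Y}^T\mathbf{Y})(\mathbf{X}^T\mathbf{X})\right)\ge\lambda_{\min}(\mathbf{Y}^T\mathbf{Y})\,\lambda_{\min}(\mathbf{X}^T\mathbf{X})\ge(\alpha\gamma n)^2$, so $\sigma_d(\mathbf{XY}^T)\ge\alpha\gamma n$.

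I expect no serious obstacle here: the proof is essentially linear algebra plus one invocation of the strong law. The only place demanding care is the third paragraph, namely identifying $\sigma_d$ of the rank-$d$ $n\times n$ matrix with $\lambda_{\min}$ of the $d\times d$ matrix $(\mathbf{Y}^T\mathbf{Y})(\mathbf{X}^T\mathbf{X})$ and then correctly lower-bounding that eigenvalue by the product of the minimal eigenvalues of the two (non-commuting) positive semidefinite factors; and, secondarily, the routine bookkeeping that ensures the finitely many ``almost always'' events ($n_i>\gamma n$, full rank of $\mathbf{B}$) are intersected without losing the ``almost always'' conclusion.
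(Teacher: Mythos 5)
Your proof is correct and follows essentially the same route as the paper: reduce $\sigma_d(\mathbf{XY}^T)^2$ to the smallest eigenvalue of the $d\times d$ product $(\mathbf{Y}^T\mathbf{Y})(\mathbf{X}^T\mathbf{X})$, bound each Gram matrix below by $\alpha\gamma n\,\mathbf{I}$ using $n_i>\gamma n$ almost always, and apply $\lambda_{\min}(\mathbf{G}\mathbf{H})\ge\lambda_{\min}(\mathbf{G})\lambda_{\min}(\mathbf{H})$. The only cosmetic differences are that you prove that product inequality directly via $\mathbf{G}^{1/2}\mathbf{H}\mathbf{G}^{1/2}$ where the paper cites it, and you bound $\sigma_1$ by the Frobenius norm where the paper uses row sums; both are fine.
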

\begin{proof}
Since $\mathbf{XY}^T\in[0,1]^{n\times n}$, the nonnegative matrix $\mathbf{XY}^T(\mathbf{XY}^T)^T$ has entries bounded by $n$. The row sums are bounded by $n^2$ giving that $\sigma_1^2(\mathbf{XY}^T)=\lambda_1(\mathbf{XY}^T(\mathbf{XY}^T)^T)\leq n^2$. Since $\mathbf{X}$ and $\mathbf{Y}$ are at most rank $d$, we have $\sigma_{d+1}(\mathbf{XY})=0$.

The nonzero eigenvalues of $\mathbf{XY}^T(\mathbf{XY}^T)^T=\mathbf{XY}^T\mathbf{Y}^T\mathbf{X}$ are the same as the nonzero eigenvalues of $\mathbf{Y}^T\mathbf{YX}^T\mathbf{X}$. It almost always holds that $n_i\geq \gamma n$ for all $i$ so that 
\begin{equation}
\mathbf{X}^T\mathbf{X}=\sum_{i=1}^K n_i\nu_i\nu_i^T = \gamma n \bm{\nu}^T\bm{\nu} + \sum_{i=1}^K (n_i-\gamma n)\nu_i\nu_i^T
\label{eq:singValsXY1}
\end{equation}
is the sum of two positive semidefinite matrices, the first of which has eigenvalues all greater then $\alpha\gamma n$. This gives $\lambda_d(\mathbf{X}^T\mathbf{X})\geq \alpha\gamma n$ and similarly $\lambda_d(\mathbf{Y}^T\mathbf{Y})\geq \alpha\gamma n$. This gives that $\mathbf{Y}^T\mathbf{YX}^T\mathbf{X}$ is the product of positive definite matrices. We then use a bound on the smallest eigenvalues of the product of two positive semi-definite matrices, so that $\lambda_d(\mathbf{Y}^T\mathbf{YX}^T\mathbf{X})\geq \lambda_d(\mathbf{Y}^T\mathbf{Y})\lambda_d(\mathbf{X}^T\mathbf{X})\geq (\alpha\gamma n)^2$ \citep[Corollary 11]  {Zhang2006}. This establishes $\sigma_d^2(\mathbf{XY}^T)\geq (\alpha\gamma n)^2$. 
\end{proof}

\newtheorem*{corSingBoundA}{Corollary~\ref{cor:singBoundA}}
\begin{corSingBoundA}
It almost always holds that $\alpha\gamma n\leq \sigma_d(\mathbf{A})$ and $\sigma_{d+1}(\mathbf{A})\leq 3^{1/4}n^{3/4}\log^{1/4}n$
and it always holds that $\sigma_1(\mathbf{A})\leq n$.
\end{corSingBoundA}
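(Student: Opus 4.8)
The plan is to transfer the singular-value bounds for $\mathbf{XY}^T$ established in Lemma~\ref{lem:singBoundXY} over to $\mathbf{A}$ by viewing $\mathbf{AA}^T$ as a small perturbation of $\mathbf{XY}^T(\mathbf{XY}^T)^T$, with the size of the perturbation controlled by Corollary~\ref{cor:froBound}. The trivial bound $\sigma_1(\mathbf{A})\leq n$ needs no perturbation argument: since $\mathbf{A}\in\{0,1\}^{n\times n}$ has at most $n^2$ entries, each at most $1$, we have $\sigma_1(\mathbf{A})=\|\mathbf{A}\|_2\leq\|\mathbf{A}\|_F\leq n$ always, which also covers the claimed ``always'' (as opposed to ``almost always'') nature of that inequality.

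For the remaining two inequalities I would work with the eigenvalues of the symmetric positive semidefinite matrices $\mathbf{AA}^T$ and $\mathbf{XY}^T(\mathbf{XY}^T)^T$, using $\sigma_k(\mathbf{A})^2=\lambda_k(\mathbf{AA}^T)$ and likewise $\sigma_k(\mathbf{XY}^T)^2=\lambda_k(\mathbf{XY}^T(\mathbf{XY}^T)^T)$. Weyl's perturbation inequality gives $|\lambda_k(\mathbf{AA}^T)-\lambda_k(\mathbf{XY}^T(\mathbf{XY}^T)^T)|\leq\|\mathbf{AA}^T-\mathbf{XY}^T(\mathbf{XY}^T)^T\|_2$ for every $k$, and this operator norm is dominated by the Frobenius norm, which Corollary~\ref{cor:froBound} bounds by $\sqrt 3\,n^{3/2}\sqrt{\log n}$ almost always. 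Taking $k=d+1$ and using $\sigma_{d+1}(\mathbf{XY}^T)=0$ from Lemma~\ref{lem:singBoundXY} yields $\sigma_{d+1}(\mathbf{A})^2\leq\sqrt 3\,n^{3/2}\sqrt{\log n}$ almost always, i.e.\ $\sigma_{d+1}(\mathbf{A})\leq 3^{1/4}n^{3/4}\log^{1/4}n$; taking $k=d$ and using $\sigma_d(\mathbf{XY}^T)\geq\alpha\gamma n$ from Lemma~\ref{lem:singBoundXY} yields $\sigma_d(\mathbf{A})^2\geq(\alpha\gamma n)^2-\sqrt 3\,n^{3/2}\sqrt{\log n}$ almost always.

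The one point needing care, and the step I expect to be the main (if minor) obstacle, is that this last line only delivers $\sigma_d(\mathbf{A})\geq(1-o(1))\,\alpha\gamma n$ rather than the stated bound with constant exactly $\alpha\gamma$. I would circumvent this by exploiting the slack built into the definitions of the constants: the eigenvalues of $\bm{\nu}^T\bm{\nu}$ and $\bm{\mu}^T\bm{\mu}$ are \emph{strictly} above $\alpha$, and $\rho_i$ is strictly above $\gamma$ for all $i$, so one may fix $\alpha^{*}>\alpha$ and $\gamma^{*}>\gamma$ still meeting those defining conditions, re-run Lemma~\ref{lem:singBoundXY} with $(\alpha^{*},\gamma^{*})$ to get $\sigma_d(\mathbf{XY}^T)\geq\alpha^{*}\gamma^{*}n$ almost always, and then note $(\alpha^{*}\gamma^{*}n)^2-\sqrt 3\,n^{3/2}\sqrt{\log n}\geq(\alpha\gamma n)^2$ for all sufficiently large $n$ since $n^{3/2}\sqrt{\log n}=o(n^2)$. (Alternatively, since only the order of $\sigma_d(\mathbf{A})$ is used in the sequel, one could simply retain the $1-o(1)$ factor.) Everything else is routine: the substance of the corollary lies entirely in the perturbation bound of Corollary~\ref{cor:froBound} and the baseline bound of Lemma~\ref{lem:singBoundXY}.
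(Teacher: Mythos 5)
Your proposal is correct and follows essentially the same route as the paper: Weyl's inequality applied to $\mathbf{AA}^T$ versus $\mathbf{XY}^T(\mathbf{XY}^T)^T$ with the perturbation controlled by Corollary~\ref{cor:froBound}, plus the observation that the strict inequalities defining $\alpha$ and $\gamma$ leave enough slack to absorb the $O(n^{3/2}\sqrt{\log n})$ perturbation into the constant for the $\sigma_d(\mathbf{A})$ lower bound (the paper phrases this as an $\epsilon>0$ with $(\alpha\gamma+\epsilon)n<\sigma_d(\mathbf{XY}^T)$, which is the same device as your $\alpha^{*},\gamma^{*}$). The only cosmetic difference is your use of $\sigma_1(\mathbf{A})\leq\|\mathbf{A}\|_F\leq n$ in place of the paper's row-sum argument; both are valid.
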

\begin{proof}
First, by the same arguments as Lemma~\ref{lem:singBoundXY} we have $\sigma_1(\mathbf{A})\leq n$. By  Weyl's inequality \citep[\S 6.3]{horn85:_matrix_analy}, we have that
\begin{equation}
\begin{split}
|\sigma_i^2(\mathbf{A})-\sigma_i^2(\mathbf{XY}^T)| &= |\lambda_i(\mathbf{AA}^T)-\lambda_i(\mathbf{XY}^T(\mathbf{XY}^T)^T)|\\
&\leq \| \mathbf{AA}^T-\mathbf{XY}^T(\mathbf{XY}^T)^T\|_F.
\end{split}
\label{eq:froBound4}
\end{equation}
Together with Corollary~\ref{cor:froBound} this shows that $\sigma_{d+1}(\mathbf{A})\leq  3^{1/4}n^{3/4}\log^{1/4}n$ almost always. Since $\gamma<\rho_i$ for each $i$, Lemma~\ref{lem:singBoundXY} can be strengthened to show that there exists $\epsilon>0$, not dependent on $n$, such that $(\alpha\gamma+\epsilon) n < \sigma_d(\mathbf{XY}^T)$. Thus, we have that $(\alpha\gamma+\epsilon)^2 n^2< \sigma_d^2(\mathbf{XY}^T)$ so that $(\alpha\gamma)^2 n^2\leq\sigma_d^2(\mathbf{A})$ since $\sqrt{3}n^{3/2}\sqrt{\log n}<\epsilon^2 n^2$ for $n$ large enough. 
\end{proof}

The singular value decomposition of $\mathbf{XY}^T$ is given b $\mathbf{U}\bm{\Sigma}\mathbf{V}^T$. The next result provides bounds for the gaps between the at most $K$ distinct rows of $\mathbf{U}$ and $\mathbf{V}$. Recall that for a matrix $\mathbf{M}$, row $u$ is given $M_u^{T}$ for all $u$.

\newtheorem*{lemEigVecSep}{Lemma~\ref{lem:eigVecSep}}
\begin{lemEigVecSep} It almost always holds that, for all $u,v$ such that $X_u\neq X_v$, $\|U_u-U_v\| \geq
\beta \sqrt{\alpha \gamma}n^{-1/2}$. Similarly, for all $Y_u\neq Y_v$, $\|V_u-V_v\|\geq \beta\sqrt{\alpha \gamma}n^{-1/2}$. As a result, $\|W_u-W_v\|\geq \beta\sqrt{\alpha\gamma}n^{-1/2}$ for all $u,v$ such that $\tau(u)\neq \tau(v)$.
\end{lemEigVecSep}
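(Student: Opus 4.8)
The plan is to use the fact that $\mathbf W=[\mathbf U\,|\,\mathbf V]$ is related to $(\mathbf X,\mathbf Y)$ by a fixed invertible linear map acting on the latent coordinates, so that the at most $K$ distinct rows of $\mathbf U$ (resp.\ $\mathbf V$) are a linear image of $\nu_1,\dots,\nu_K$ (resp.\ $\mu_1,\dots,\mu_K$), and then to control the distortion of that map with the singular-value estimates already established. One works on the almost-sure event $\{n_i\ge\gamma n\text{ for all }i\}$, on which the proof of Lemma~\ref{lem:singBoundXY} gives both $\lambda_d(\mathbf X^T\mathbf X)\ge\alpha\gamma n>0$ and $\lambda_d(\mathbf Y^T\mathbf Y)\ge\alpha\gamma n>0$; hence $\mathrm{rank}(\mathbf X)=\mathrm{rank}(\mathbf Y)=\mathrm{rank}(\mathbf{XY}^T)=d$. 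Since $\mathbf{XY}^T=\mathbf U\bm\Sigma\mathbf V^T$ with $\bm\Sigma$ invertible and $\mathbf V^T$ of full row rank, $\mathrm{col}(\mathbf U)=\mathrm{col}(\mathbf{XY}^T)=\mathrm{col}(\mathbf X)$, so there is an invertible $\mathbf M=\mathbf U^T\mathbf X\in\mathbb R^{d\times d}$ with $\mathbf X=\mathbf U\mathbf M$; in rows, $U_u=\mathbf M^{-T}X_u=\mathbf M^{-T}\nu_{\tau(u)}$. Because $\mathbf U$ has orthonormal columns, $\|\mathbf Mz\|=\|\mathbf U\mathbf Mz\|=\|\mathbf Xz\|$ for all $z$, so $\sigma_1(\mathbf M)=\sigma_1(\mathbf X)$, and therefore for any $u,v$ with $X_u\ne X_v$,
\[
\|U_u-U_v\|=\bigl\|\mathbf M^{-T}(\nu_{\tau(u)}-\nu_{\tau(v)})\bigr\|\ \ge\ \frac{\|\nu_{\tau(u)}-\nu_{\tau(v)}\|}{\sigma_1(\mathbf M)}\ =\ \frac{\|\nu_{\tau(u)}-\nu_{\tau(v)}\|}{\sigma_1(\mathbf X)}.
\]

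Next I would bound $\sigma_1(\mathbf X)$ from above. Since $\mathbf Y^T$ has full row rank $d$, one has $\mathbf X=(\mathbf{XY}^T)(\mathbf Y^T)^{+}$, hence $\sigma_1(\mathbf X)\le\sigma_1(\mathbf{XY}^T)\,\sigma_1\!\bigl((\mathbf Y^T)^{+}\bigr)=\sigma_1(\mathbf{XY}^T)/\sigma_d(\mathbf Y)$. By Lemma~\ref{lem:singBoundXY}, $\sigma_1(\mathbf{XY}^T)\le n$ and $\sigma_d(\mathbf Y)=\sqrt{\lambda_d(\mathbf Y^T\mathbf Y)}\ge\sqrt{\alpha\gamma n}$, so $\sigma_1(\mathbf X)\le n/\sqrt{\alpha\gamma n}=\sqrt{n/(\alpha\gamma)}$. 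Substituting into the previous display gives $\|U_u-U_v\|\ge\|\nu_{\tau(u)}-\nu_{\tau(v)}\|\,\sqrt{\alpha\gamma}\,n^{-1/2}$, and since $X_u\ne X_v$ forces $\nu_{\tau(u)}\ne\nu_{\tau(v)}$, the choice of $\beta$ yields $\|U_u-U_v\|\ge\beta\sqrt{\alpha\gamma}\,n^{-1/2}$. The claim for $\mathbf V$ and $\mathbf Y$ follows by the identical argument with $(\mathbf X,\mathbf U,\bm\nu)$ replaced by $(\mathbf Y,\mathbf V,\bm\mu)$, using $\sigma_1(\mathbf Y)\le\sigma_1(\mathbf{XY}^T)/\sigma_d(\mathbf X)\le\sqrt{n/(\alpha\gamma)}$.

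For the final assertion about $\mathbf W=[\mathbf U\,|\,\mathbf V]$: if $\tau(u)\ne\tau(v)$ then the defining property of $\beta$ gives $\|\nu_{\tau(u)}-\nu_{\tau(v)}\|>\beta$ or $\|\mu_{\tau(u)}-\mu_{\tau(v)}\|>\beta$; in the first case $\|W_u-W_v\|\ge\|U_u-U_v\|\ge\beta\sqrt{\alpha\gamma}\,n^{-1/2}$ and in the second $\|W_u-W_v\|\ge\|V_u-V_v\|\ge\beta\sqrt{\alpha\gamma}\,n^{-1/2}$, using $\|W_u-W_v\|^2=\|U_u-U_v\|^2+\|V_u-V_v\|^2$. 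The only non-routine point is recognizing that $\mathbf U$ and $\mathbf X$ share a column space, so that separation of the distinct rows of $\mathbf U$ reduces to separation of $\nu_1,\dots,\nu_K$ scaled by $\sigma_1(\mathbf X)^{-1}$; once that is in hand, every quantity needed to bound $\sigma_1(\mathbf X)$ is already furnished by Lemma~\ref{lem:singBoundXY}, and the ``almost always'' qualifier is inherited entirely from the event $\{n_i\ge\gamma n\text{ for all }i\}$.
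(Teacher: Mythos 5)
Your proof is correct, and it reaches the stated bound by a route that is parallel to, but not the same as, the paper's. You make the change of basis explicit: on the event $\{n_i\ge\gamma n \text{ for all } i\}$ you identify $\mathrm{col}(\mathbf{U})=\mathrm{col}(\mathbf{X})$, write $\mathbf{X}=\mathbf{U}\mathbf{M}$ with $\mathbf{M}=\mathbf{U}^T\mathbf{X}$ invertible, note $\sigma_1(\mathbf{M})=\sigma_1(\mathbf{X})$ because $\mathbf{U}$ has orthonormal columns, and control $\sigma_1(\mathbf{X})$ via the pseudoinverse factorization $\mathbf{X}=(\mathbf{XY}^T)(\mathbf{Y}^T)^{+}$, giving $\sigma_1(\mathbf{X})\le\sigma_1(\mathbf{XY}^T)/\sigma_d(\mathbf{Y})\le n/\sqrt{\alpha\gamma n}$. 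The paper never constructs such an $\mathbf{M}$: it takes $\mathbf{Y}^T\mathbf{Y}=\mathbf{E}\mathbf{D}^2\mathbf{E}^T$, sets $\mathbf{G}'=\mathbf{X}\mathbf{E}\mathbf{D}$ and $\mathbf{U}'=\mathbf{U}\bm{\Sigma}$, uses the Gram identity $\mathbf{G}'\mathbf{G}'^T=\mathbf{XY}^T\mathbf{Y}\mathbf{X}^T=\mathbf{U}'\mathbf{U}'^T$ evaluated at the difference-of-coordinates vector to equate $\|G_u'-G_v'\|$ with $\|U_u'-U_v'\|$, and then invokes $\mathbf{D}_{ii}\ge\sqrt{\alpha\gamma n}$ and $\bm{\Sigma}_{ii}\le n$ --- exactly the same two quantitative inputs from Lemma~\ref{lem:singBoundXY} that you use, yielding the same constant $\beta\sqrt{\alpha\gamma}\,n^{-1/2}$. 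The Gram-matrix route buys economy: no column-space identification, rank argument, or inversion is needed. Your route buys transparency: it exhibits the distinct rows of $\mathbf{U}$ as a fixed invertible linear image of $\nu_1,\dots,\nu_K$ with all singular values of $\mathbf{M}$ equal to those of $\mathbf{X}$, which would also give two-sided distortion bounds if desired. One minor point: in the $\mathbf{U}$ claim you pass from $X_u\neq X_v$ to $\|\nu_{\tau(u)}-\nu_{\tau(v)}\|\ge\beta$, which reads $\beta$ as a lower bound on every nonzero $\nu$-gap; the paper's own proof takes the identical step ($\beta\le\|X_u-X_v\|$), and your either-or treatment of the $\mathbf{W}$ statement is precisely what the main theorem requires, so nothing is lost.
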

\begin{proof}
 Let $\mathbf{Y}^T\mathbf{Y}=\mathbf{ED}^2\mathbf{E}^T$ for
$\mathbf{E} \in \mathbb{R}^{d \times d}$ orthogonal, $\mathbf{D} \in \mathbb{R}^{d \times d}$ diagonal.
Define $\mathbf{G}=\mathbf{XE}$, $\mathbf{G}'=\mathbf{GD}$, and $\mathbf{U}'=\mathbf{U}\bm{\Sigma}$.
Let $u,v$ be such that $X_u \neq X_v$.
From Lemma \ref{lem:singBoundXY} and its proof, diagonals of $\mathbf{D}$ are almost always at least
$\sqrt{\alpha \gamma n}$ and the diagonals of $\bm{\Sigma}$ are at most $n$.

Now, 
\begin{equation}
\begin{split}
\mathbf{G}'\mathbf{G}'^T&=
\mathbf{G} \mathbf{D}^2 \mathbf{G}^T=\mathbf{XED}^2\mathbf{E}^T\mathbf{X}^T=\mathbf{XY}^T\mathbf{YX}^T\\
&=\mathbf{U}\bm{\Sigma} \mathbf{V}^T\mathbf{V}\bm{\Sigma} \mathbf{U}^T=\mathbf{U} \bm{\Sigma}^2 \mathbf{U}^T=\mathbf{U}'\mathbf{U}'^T.
\end{split}
\label{eq:ggToUU}
\end{equation}
Let $e \in \mathbb{R}^n$ denote the vector with all
zeros except $1$ in the $u^\text{th}$ coordinate and
$-1$ in the $v^\text{th}$ coordinate. By the above we have $\|G_u'-G_v'\|^2 = e^T\mathbf{G}'\mathbf{G}'^Te=e^T\mathbf{U}'\mathbf{U}'^Te=\| U_u'-U_v'\|^2$.
Therefore we obtain that $\beta \leq \|X_u-X_v \|=\|G_u-G_v\|
\leq \frac{1}{\sqrt{\alpha \gamma n}} \|G_u'-G_v'\|=
\frac{1}{\sqrt{\alpha \gamma n}} \|U_u'-U_v'\| \leq \frac{1}{\sqrt{\alpha \gamma n}} n
\| U_u-U_v \|$, as desired.

A symmetric argument holds for $\|V_u-V_v\|$. For $\|W_u-W_v\|$ note that if $\tau(u)\neq\tau(v)$ then either $U_u\neq U_v$ or $V_u\neq V_v$.
\end{proof}

\newtheorem*{lemOne}{Lemma~\ref{lem:1}}
\begin{lemOne}
  It almost always holds that there exists an orthogonal matrix $\mathbf{R}\in\mathbb{R}^{2d\times2d}$ such that $\|\mathbf{WR}-\tilde{\mathbf{W}}\|\leq \sqrt{2}\frac{\sqrt{6}}{\alpha^2\gamma^2}\sqrt{\frac{\log n}{n}}$.
\end{lemOne}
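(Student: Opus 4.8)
The plan is to apply the Davis--Kahan Theorem (Theorem~\ref{thm:DK}) twice -- once to $\mathbf{AA}^T$ versus $\mathbf{XY}^T(\mathbf{XY}^T)^T$ to control $\tilde{\mathbf{U}}$ against $\mathbf{U}$, and once to $\mathbf{A}^T\mathbf{A}$ versus $(\mathbf{XY}^T)^T\mathbf{XY}^T$ to control $\tilde{\mathbf{V}}$ against $\mathbf{V}$ -- and then paste the two resulting orthogonal matrices together block-diagonally to obtain $\mathbf{R}\in\mathbb{R}^{2d\times 2d}$.

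First I would fix the interval $\mathcal{S}=[\tfrac12(\alpha\gamma n)^2,\infty)$. By Lemma~\ref{lem:singBoundXY} the nonzero eigenvalues of $\mathbf{H}:=\mathbf{XY}^T(\mathbf{XY}^T)^T$ are exactly $\sigma_1^2(\mathbf{XY}^T)\geq\cdots\geq\sigma_d^2(\mathbf{XY}^T)\geq(\alpha\gamma n)^2$, all lying in $\mathcal{S}$, while its remaining $n-d$ eigenvalues vanish and lie outside $\mathcal{S}$; hence the columns of $\mathbf{U}$ form an orthonormal basis for the sum of the eigenspaces of $\mathbf{H}$ with eigenvalues in $\mathcal{S}$, and the relevant gap is $\delta=\sigma_d^2(\mathbf{XY}^T)\geq\alpha^2\gamma^2 n^2$. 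By Corollary~\ref{cor:singBoundA}, almost always $\sigma_d^2(\mathbf{A})\geq(\alpha\gamma n)^2$ and $\sigma_{d+1}^2(\mathbf{A})\leq\sqrt3\,n^{3/2}\sqrt{\log n}=o(n^2)$, so for $n$ large enough exactly the top $d$ eigenvalues of $\mathbf{H}':=\mathbf{AA}^T$ lie in $\mathcal{S}$ and the columns of $\tilde{\mathbf{U}}$ form an orthonormal basis for the corresponding eigenspaces. Theorem~\ref{thm:DK} then yields an orthogonal $\mathbf{R}_1\in\mathbb{R}^{d\times d}$ with $\|\mathbf{U}\mathbf{R}_1-\tilde{\mathbf{U}}\|_F\leq\tfrac{\sqrt2}{\delta}\|\mathbf{AA}^T-\mathbf{XY}^T(\mathbf{XY}^T)^T\|_F$, and substituting $\delta\geq\alpha^2\gamma^2 n^2$ together with Corollary~\ref{cor:froBound} gives $\|\mathbf{U}\mathbf{R}_1-\tilde{\mathbf{U}}\|_F\leq\tfrac{\sqrt6}{\alpha^2\gamma^2}\sqrt{\tfrac{\log n}{n}}$ almost always. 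The identical argument applied to $\mathbf{A}^T\mathbf{A}$ and $(\mathbf{XY}^T)^T\mathbf{XY}^T$ -- whose nonzero eigenvalues are again the $\sigma_i^2(\mathbf{XY}^T)$ and whose eigenvectors are $\mathbf{V}$ and $\tilde{\mathbf{V}}$, using Eqn.~\ref{eq:froBound3} of Corollary~\ref{cor:froBound} -- produces an orthogonal $\mathbf{R}_2\in\mathbb{R}^{d\times d}$ with $\|\mathbf{V}\mathbf{R}_2-\tilde{\mathbf{V}}\|_F\leq\tfrac{\sqrt6}{\alpha^2\gamma^2}\sqrt{\tfrac{\log n}{n}}$ almost always.

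Finally I would set $\mathbf{R}=\mathrm{diag}(\mathbf{R}_1,\mathbf{R}_2)\in\mathbb{R}^{2d\times 2d}$, which is orthogonal, and use that, since $\mathbf{W}=[\mathbf{U}|\mathbf{V}]$ and $\tilde{\mathbf{W}}=[\tilde{\mathbf{U}}|\tilde{\mathbf{V}}]$, the squared Frobenius norm splits blockwise: $\|\mathbf{W}\mathbf{R}-\tilde{\mathbf{W}}\|_F^2=\|\mathbf{U}\mathbf{R}_1-\tilde{\mathbf{U}}\|_F^2+\|\mathbf{V}\mathbf{R}_2-\tilde{\mathbf{V}}\|_F^2\leq 2\cdot\tfrac{6}{\alpha^4\gamma^4}\cdot\tfrac{\log n}{n}$, whence $\|\mathbf{W}\mathbf{R}-\tilde{\mathbf{W}}\|_F\leq\sqrt2\,\tfrac{\sqrt6}{\alpha^2\gamma^2}\sqrt{\tfrac{\log n}{n}}$. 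Since each invoked statement holds almost always and a finite intersection of almost-always events is almost always, the bound holds almost always.

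The only step needing real care is the bookkeeping for $\mathcal{S}$: one must verify simultaneously that $\mathcal{S}$ captures exactly the $d$ nonzero eigenvalues of the population matrix (so $\mathbf{U}$, $\mathbf{V}$ are the intended bases and $\delta$ is large) and exactly the top $d$ eigenvalues of the sample matrix (so $\tilde{\mathbf{U}}$, $\tilde{\mathbf{V}}$ are the intended bases), which is precisely where the eigenvalue separation $\sigma_{d+1}^2(\mathbf{A})=o(n^2)$ from Corollary~\ref{cor:singBoundA} enters; everything else is substitution of already-established bounds.
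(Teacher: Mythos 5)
Your proposal is correct and follows essentially the same route as the paper's proof: the same interval $\mathcal{S}$, the same two applications of Davis--Kahan to $\mathbf{AA}^T$ and $\mathbf{A}^T\mathbf{A}$ with gap $\delta\geq\alpha^2\gamma^2n^2$, and the same direct-sum construction of $\mathbf{R}$. Your version is if anything slightly more explicit about verifying which eigenvalues of the population and sample matrices land in $\mathcal{S}$ and about the blockwise splitting of the Frobenius norm.
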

\begin{proof}
Let $\mathcal{S}=(\frac{1}{2}\alpha^2\gamma^2n^2,\infty)$. By Lemma~\ref{lem:singBoundXY} and Corollary~\ref{cor:singBoundA}, it almost always holds that exactly $d$ eigenvalues of $\mathbf{AA}^T$ and $\mathbf{XY}^T(\mathbf{XY}^T)^T$ are in $\mathcal{S}$. Additionally, Lemma~\ref{lem:singBoundXY} shows that the gap $\delta>\alpha^2\gamma^2 n^2$. Together with Corollary~\ref{cor:froBound}, we have that 
\begin{equation}
\sqrt{2}\frac{\|\mathbf{AA}^T-\mathbf{XY}^T(\mathbf{XY}^T)^T\|_F}{\delta} \leq \sqrt{2}\frac{\sqrt{3}n^{3/2}\sqrt{\log n}}{\alpha^2\gamma^2 n^2}.
\label{eq:lemBoundDC}
\end{equation}
This shows there exists an $\mathbf{R}_1\in\mathbb{R}^{d\times d}$ such that $\|\mathbf{UR}_1-\tilde{\mathbf{U}}\|_F\leq \frac{\sqrt{6}}{\alpha^2\gamma^2}\sqrt{\frac{\log n}{n}}$.

Now note that all of the above could be repeated for $\mathbf{A}^T\mathbf{A}$ and $(\mathbf{XY}^T)^T\mathbf{XY}^T$, to find $\mathbf{R}_2\in\mathbb{R}^{d\times d}$ such that $\|\mathbf{VR}_2-\tilde{\mathbf{V}}\|_F\leq \frac{\sqrt{6}}{\alpha^2\gamma^2}\sqrt{\frac{\log n}{n}}$. Taking $\mathbf{R}$ as the direct sum of $\mathbf{R}_1$ and $\mathbf{R}_2$ gives the result. 
\end{proof} 

\section{Davis-Kahan Theorem}\label{sec:DK}
We now state and provide a brief discussion of the Davis-Kahan theorem \citep{davis70,rohe10:_spect_stoch_block_model}. First, we consider some general results from the theory of Grassmann spaces \citep{Qiu2005}.
Let ${\mathcal G}_{d,n}$ denote the set of 
$d$-dimensional subspaces of $\Re^n$. Two important metrics on  
${\mathcal G}_{d,n}$ are the {\it gap metric} $d_g$ and the 
{\it Hausdorff metric} $d_h$ which are defined as follows. 
For all ${\mathcal W}, {\mathcal W}' \in {\mathcal G}_{d,n}$, 
\begin{align}
d_g({\mathcal W},{\mathcal W}') &=\sqrt{\sum_{i=1}^d \sin^2 
\theta_i ({\mathcal W},{\mathcal W}')} \\ d_h({\mathcal W},{\mathcal W}')&=\sqrt{\sum_{i=1}^d \left(2 \sin
\frac{\theta_i ({\mathcal W},{\mathcal W}')}{2} \right)^2}
\end{align}
where 
$\theta_1({\mathcal W},{\mathcal W}')$,
$\theta_2({\mathcal W},{\mathcal W}')$, \ldots
$\theta_d({\mathcal W},{\mathcal W}')$ denote the
{\it principal angles} between ${\mathcal W}$ and
${\mathcal W}'$. By simple trigonometry $d_h({\mathcal W},{\mathcal W}') 
 \leq \sqrt{2} \cdot d_g({\mathcal W},{\mathcal W}')$.
Suppose $\mathbf{W},\mathbf{W}' \in \Re^{n,d}$ have columns which are orthonormal 
bases for ${\mathcal W}$ and  ${\mathcal W}'$, respectively.
It is well known that 
$d_h({\mathcal W},{\mathcal W}')=\min_\mathbf{R} \| \mathbf{W}\mathbf{R}-\mathbf{W}' \|_F$ where the minimum 
is over all orthogonal matrices $\mathbf{R} \in \Re^{d \times d}$.

The next theorem states the original form of the theorem from \cite{davis70} followed by the version  proved in \cite{rohe10:_spect_stoch_block_model}.
\newtheorem*{DK}{Theorem~\ref{thm:DK}}
\begin{DK}[Davis and Kahan]
Let $\mathbf{H},\mathbf{H}' \in \Re^{n \times n}$ be symmetric,
suppose ${\mathcal S} \subset \Re$ is an interval, and suppose for
some positive integer $d$ that ${\mathcal W} \in {\mathcal G}_{d,n}$
is the sum of the eigenspaces  of
$\mathbf{H}$ associated with the eigenvalues of $\mathbf{H}$ in ${\mathcal S}$, and that
${\mathcal W}' \in {\mathcal G}_{d,n}$
is the sum of the eigenspaces  of
$\mathbf{H}'$ associated with the eigenvalues of $\mathbf{H}'$ in ${\mathcal S}$.
If  $\delta$ is the minimum distance between
any eigenvalue of $\mathbf{H}$ in ${\mathcal S}$ and any
eigenvalue of $\mathbf{H}$ not in ${\mathcal S}$
then $\delta \cdot d_g({\mathcal W},{\mathcal W}') \leq \| \mathbf{H}-\mathbf{H}'\|_F$.

Furthermore, suppose $\mathbf{W},\mathbf{W}'\in \Re^{n\times d}$ are such that the columns of  $\mathbf{W}$ form an orthonormal basis for $\mathcal{W}$ and that the columns $\mathbf{W}'$ form an orthonormal basis for $\mathcal{W}'$. Then there exists an orthogonal matrix $\mathbf{R} \in \Re^{d \times d}$ such that
$\| \mathbf{W} \mathbf{R}- \mathbf{W}' \|_F \leq \frac{\sqrt{2}}{\delta}\|\mathbf{H}-\mathbf{H}'\|_F$.
\end{DK}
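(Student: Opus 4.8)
The plan is to obtain the second (Rohe) inequality as an immediate consequence of the first (the classical Davis--Kahan $\sin\Theta$ bound) together with the Grassmann facts recorded above, so that essentially all of the work goes into proving $\delta\cdot d_g(\mathcal{W},\mathcal{W}')\leq\|\mathbf{H}-\mathbf{H}'\|_F$. Once that bound is in hand, if $\mathbf{W},\mathbf{W}'\in\Re^{n\times d}$ carry orthonormal bases of $\mathcal{W},\mathcal{W}'$ then
\[
\min_{\mathbf{R}}\|\mathbf{W}\mathbf{R}-\mathbf{W}'\|_F = d_h(\mathcal{W},\mathcal{W}') \leq \sqrt{2}\,d_g(\mathcal{W},\mathcal{W}') \leq \frac{\sqrt{2}}{\delta}\|\mathbf{H}-\mathbf{H}'\|_F ,
\]
using the identity $d_h=\min_\mathbf{R}\|\mathbf{W}\mathbf{R}-\mathbf{W}'\|_F$ and the trigonometric inequality $d_h\leq\sqrt{2}\,d_g$; the minimizing $\mathbf{R}$ is the promised orthogonal matrix.

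For the $\sin\Theta$ bound I would diagonalize both symmetric matrices, writing $\mathbf{H}=\sum_i\lambda_i\phi_i\phi_i^T$ and $\mathbf{H}'=\sum_j\mu_j\psi_j\psi_j^T$ with orthonormal eigenvectors. Collecting the $\phi_i$ with $\lambda_i\in\mathcal{S}$ into $\mathbf{W}$ and the $\psi_j$ with $\mu_j\notin\mathcal{S}$ into $\mathbf{W}'_\perp$ (an orthonormal basis of $\mathcal{W}'^{\perp}$), the singular values of the cross-Gram matrix $(\mathbf{W}'_\perp)^T\mathbf{W}$ are exactly the sines of the principal angles between $\mathcal{W}$ and $\mathcal{W}'$, so $d_g(\mathcal{W},\mathcal{W}')=\|(\mathbf{W}'_\perp)^T\mathbf{W}\|_F$. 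The engine of the proof is the eigenvalue identity
\[
(\lambda_i-\mu_j)\,\psi_j^T\phi_i = \psi_j^T\mathbf{H}\phi_i-\psi_j^T\mathbf{H}'\phi_i = \psi_j^T(\mathbf{H}-\mathbf{H}')\phi_i ,
\]
valid because $\mathbf{H}\phi_i=\lambda_i\phi_i$ and $\psi_j^T\mathbf{H}'=\mu_j\psi_j^T$ (symmetry is essential here, supplying real eigenvalues and orthonormal eigenbases).

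Next I would restrict to index pairs with $\lambda_i\in\mathcal{S}$ and $\mu_j\notin\mathcal{S}$, divide the identity by $(\lambda_i-\mu_j)$, and sum the squares. The left side becomes $\sum|\psi_j^T\phi_i|^2=d_g(\mathcal{W},\mathcal{W}')^2$, while the right side is $\sum|\psi_j^T(\mathbf{H}-\mathbf{H}')\phi_i|^2/(\lambda_i-\mu_j)^2$. Bounding each denominator from below by the effective separation $\eta:=\min|\lambda_i-\mu_j|$ over the admissible pairs, and using $\sum|\psi_j^T(\mathbf{H}-\mathbf{H}')\phi_i|^2=\|(\mathbf{W}'_\perp)^T(\mathbf{H}-\mathbf{H}')\mathbf{W}\|_F^2\leq\|\mathbf{H}-\mathbf{H}'\|_F^2$ (valid since $\mathbf{W}$ and $\mathbf{W}'_\perp$ have orthonormal columns), I obtain $\eta\cdot d_g(\mathcal{W},\mathcal{W}')\leq\|\mathbf{H}-\mathbf{H}'\|_F$.

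The main obstacle, and the only place the interval hypothesis is genuinely used, is to replace this effective separation $\eta$ by the stated gap $\delta$. The quantity $\delta$ is defined purely through $\mathbf{H}$ (the distance from its eigenvalues inside $\mathcal{S}$ to those outside $\mathcal{S}$), whereas $\eta$ mixes the in-$\mathcal{S}$ eigenvalues of $\mathbf{H}$ with the out-of-$\mathcal{S}$ eigenvalues of $\mathbf{H}'$. The goal is to show $\eta\geq\delta$, and this is where $\mathcal{S}$ being an interval is indispensable: the eigenvalues of $\mathbf{H}$ in $\mathcal{S}$ lie to one side of the spectral gap while every eigenvalue of $\mathbf{H}'$ excluded from $\mathcal{S}$ lies beyond the interval's endpoints. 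I expect this separation step, rather than the algebraic identity, to carry all the subtlety; one must argue that the excluded eigenvalues of $\mathbf{H}'$ remain on the far side of the gap (a point that genuinely requires the interval structure and cannot be run with an arbitrary spectral set). Substituting $\eta\geq\delta$ then yields $\delta\cdot d_g\leq\|\mathbf{H}-\mathbf{H}'\|_F$, completing the argument and, via the reduction above, the Rohe form.
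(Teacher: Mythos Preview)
Your reduction of the second (Rohe) inequality to the first is exactly what the paper does: the paper's entire argument for this theorem consists of the single chain
\[
\|\mathbf{W}\mathbf{R}-\mathbf{W}'\|_F = d_h(\mathcal{W},\mathcal{W}') \leq \sqrt{2}\,d_g(\mathcal{W},\mathcal{W}') \leq \frac{\sqrt{2}}{\delta}\|\mathbf{H}-\mathbf{H}'\|_F,
\]
invoking the Grassmannian identities recorded just before the statement. The paper does \emph{not} prove the first inequality $\delta\cdot d_g(\mathcal{W},\mathcal{W}')\leq\|\mathbf{H}-\mathbf{H}'\|_F$ at all; it simply attributes it to Davis and Kahan (1970) and moves on. So on the portion the paper actually argues, you match it verbatim.

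Where you go further than the paper --- sketching a proof of the $\sin\Theta$ bound itself --- you have correctly located the sore point, but you have not closed it, and in fact it cannot be closed as you hope. Your argument delivers $\eta\cdot d_g\leq\|\mathbf{H}-\mathbf{H}'\|_F$ with $\eta=\min\{|\lambda_i-\mu_j|:\lambda_i\in\mathcal{S},\ \mu_j\notin\mathcal{S}\}$, a separation between the in-$\mathcal{S}$ spectrum of $\mathbf{H}$ and the out-of-$\mathcal{S}$ spectrum of $\mathbf{H}'$. The stated $\delta$, by contrast, involves only the spectrum of $\mathbf{H}$. The interval hypothesis does \emph{not} force $\eta\geq\delta$: nothing prevents $\mathbf{H}'$ from having an eigenvalue just outside the endpoint of $\mathcal{S}$ while all of $\mathbf{H}$'s out-of-$\mathcal{S}$ eigenvalues sit far away, making $\eta$ arbitrarily small relative to $\delta$. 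The classical Davis--Kahan $\sin\Theta$ theorem is in fact stated with the mixed separation (your $\eta$), not with $\delta$ as written here; the version quoted in the paper is a common paraphrase that is adequate for the application in Lemma~\ref{lem:1} (where one checks directly that both $\mathbf{H}$ and $\mathbf{H}'$ have exactly $d$ eigenvalues in $\mathcal{S}$ and controls the relevant gap), but it is not what your eigenvector identity proves in general.
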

From the preceding analysis we see that
the version from \cite{rohe10:_spect_stoch_block_model} follows from the original theorem; indeed, we have
for some orthogonal $\mathbf{R} \in \Re^{d \times d}$ that
$\|\mathbf{W}\mathbf{R}-\mathbf{R}'\|_F = d_h({\mathcal W},{\mathcal W}') \leq \sqrt{2}
d_g({\mathcal W},{\mathcal W}') \leq  \frac{\sqrt{2}}{\delta}
 \| \mathbf{H}-\mathbf{H}'\|_F$. 

\bibliography{STFP_revision}

\end{document}